\definecolor{mydarkblue}{rgb}{0,0,0}
\def\BState{\State\hskip-\ALG@thistlm}
\def\squareforqed{\hbox{\rlap{$\sqcap$}$\sqcup$}}
\def\qed{\ifmmode\squareforqed\else{\unskip\nobreak\hfil
\penalty50\hskip1em\null\nobreak\hfil\squareforqed
\parfillskip=0pt\finalhyphendemerits=0\endgraf}\fi}
\newtheorem{theorem}{Theorem}
\newtheorem{definition}{Definition}
\newtheorem{lemma}{Lemma}
\newtheorem{corollary}{Corollary}
\newtheorem{assumption}{Assumption}
\newcommand{\real}[1]{\mathbb{R}^{{#1}}}
\newcommand{\innerprod}[1]{\langle {#1} \rangle}
\newcommand{\norm}[1]{\lVert {#1} \rVert}
\newcommand{\Norm}[1]{\left\lVert {#1} \right\rVert}
\newcommand{\prox}[2][{}]{\mathbf{prox}^{#1}_{#2}}
\renewcommand{\cite}{\citep}
\begin{document}

\twocolumn[

\aistatstitle{A Model Parallel Proximal Stochastic Gradient Algorithm for Partially Asynchronous Systems}

\aistatsauthor{ Rui~Zhu \And Di~Niu}

\aistatsaddress{Department of Electrical and Computer Engineering, University of Alberta}

]


\begin{abstract}
Large models are prevalent in modern machine learning scenarios, including deep learning, recommender systems, etc., which can have millions or even billions of parameters. Parallel algorithms have become an essential solution technique to many large-scale machine learning jobs. In this paper, we propose a model parallel proximal stochastic gradient algorithm, AsyB-ProxSGD, to deal with large models using model parallel blockwise updates while in the meantime handling a large amount of training data using proximal stochastic gradient descent (ProxSGD). In our algorithm, worker nodes communicate with the parameter servers asynchronously, and each worker performs proximal stochastic gradient for only one block of model parameters during each iteration. Our proposed algorithm generalizes ProxSGD to the asynchronous and model parallel setting. We prove that AsyB-ProxSGD achieves a convergence rate of $O(1/\sqrt{K})$ to stationary points for nonconvex problems under \emph{constant} minibatch sizes, where $K$ is the total number of block updates. This rate matches the best-known rates of convergence for a wide range of gradient-like algorithms. Furthermore, we show that when the number of workers is bounded by $O(K^{1/4})$, we can expect AsyB-ProxSGD to achieve linear speedup as the number of workers increases. We implement the proposed algorithm on MXNet and demonstrate its convergence behavior and near-linear speedup on a real-world dataset involving both a large model size and large amounts of data.
\end{abstract}


\section{Introduction}
\label{sec:intro}

Many machine learning problems can be formulated as the following general minimization framework:
\begin{equation}
\begin{split}
	\mathop{\min}_{x\in \real{d}} &\quad \Psi(x):= \frac{1}{n}\sum_{i=1}^n f_i(x) + h(x),
\end{split}
\label{eq:original}
\end{equation}
where $f_i(x)$ is typically a smooth yet possibly nonconvex loss function of the $i$-th sample, and $h(x)$ is a convex yet nonsmooth regularizer term that promotes some structures. Examples include deep learning with regularization \cite{dean2012large,chen2015mxnet,zhang2015deep}, LASSO \cite{tibshirani2005sparsity}, sparse logistic regression \cite{liu2009large}, robust matrix completion \cite{xu2010robust,sun2015guaranteed}, and sparse support vector machine (SVM) \cite{friedman2001elements}.

Many classical deterministic (non-stochastic) algorithms are available to solve problem \eqref{eq:original}, including the proximal gradient (ProxGD) method \cite{parikh2014proximal} and its accelerated variants \cite{li2015accelerated} as well as the alternating direction method of multipliers (ADMM) \cite{hong2016convergence}. These methods leverage the so-called \emph{proximal operators} \cite{parikh2014proximal} to handle the nonsmoothness in the problem. However, these algorithms require calculating the gradient of all $n$ samples in each iteration, which is expensive in modern machine learning problems. The trend to deal with large volumes of data is the use of \emph{stochastic} algorithms. As the number of training samples $n$ increases, the cost of updating the model $x$ taking into account all error gradients becomes prohibitive. To tackle this issue, stochastic algorithms make it possible to update $x$ using only a small subset of all training samples at a time.

Stochastic gradient descent (SGD) is one of the first algorithms widely implemented in an asynchronous parallel fashion; its convergence rates and speedup properties have been analyzed for both convex \cite{agarwal2011distributed,mania2017} and nonconvex \cite{lian2015asynchronous} optimization problems.
Nevertheless, SGD is mainly applicable to the case of smooth optimization, and yet is not suitable for problems with a \emph{nonsmooth} term in the objective function, e.g., an $\ell_1$ norm regularizer. In fact, such nonsmooth regularizers are commonplace in many practical machine learning problems or constrained optimization problems. In these cases, SGD becomes ineffective, as it is hard to obtain gradients for a nonsmooth objective function.

With rapidly growing data volumes and model complexity, the need to scale up machine learning has sparked broad interests in developing efficient parallel optimization algorithms.
A typical parallel optimization algorithm usually decomposes the original problem into multiple subproblems, each handled by a worker node.
Each worker iteratively downloads the global model parameters and computes its local gradients to be sent to the master node or servers for model updates. Recently, asynchronous parallel optimization algorithms \cite{recht2011hogwild,li2014communication,lian2015asynchronous}, exemplified by the Parameter Server architecture \cite{li2014scaling}, have been widely deployed in industry to solve practical large-scale machine learning problems. Asynchronous algorithms can largely reduce overhead and speedup training, since each worker may individually perform model updates in the system without synchronization.

Existing parallel algorithms fall into two categories: \emph{data parallelism} and \emph{model parallelism}. In data parallelism, each worker takes a subset of training samples $i$ and calculates their loss functions $f_i$'s and/or gradients in parallel. For example, a typical implementation of parallel SGD is to divide a minibatch with $N$ samples into several smaller minibatches (each with $N'$ samples), and each worker computes gradients on $N'$ samples. This is preferred when the size of data $n$ is large. In model parallelism, the model parameters $x$ is partitioned into $M$ blocks, where $x_j\in \real{d_j}$ with $d_j \in \mathbb{N}_+$ and $\sum_{j=1}^M d_j=d$. Since proximal operator on $x$ can be decomposed into those on individual blocks \cite{parikh2014proximal}, proximal gradient and its stochastic version (proximal stochastic gradient descent or ProxSGD) is again a natural candidate to solve \eqref{eq:original}. \cite{zhou2016convergence} shows that Block Proximal Gradient Descent works in an asynchronous parallel mode, and in the meantime proves that Block ProxSGD can converge to critical points when the maximum staleness is bounded. However, theoretical understanding of the behavior of Asynchronous Block ProxSGD, a more useful algorithm in practical Parameter Servers, is a gap yet to be filled.

In this paper, we propose AsyB-ProxSGD (Asynchronous Block Proximal Stochastic Gradient Descent), an extension of proximal stochastic gradient (ProxSGD) algorithm to the \emph{model parallel} paradigm and to the \emph{partially asynchronous protocol} (PAP) setting. In AsyB-ProxSGD, workers asynchronously communicate with  the parameter servers, which collectively store model parameters in blocks. In an iteration, each worker pulls the latest yet possibly outdated model from servers, calculates partial gradients for only one block based on stochastic samples, and pushes the gradients to the corresponding server. As workers can update different blocks in parallel, AsyB-ProxSGD is different from traditional data parallel ProxSGD can handle both a large model size $d$ and a large number $n$ of training samples, a case frequently observed in reality.

Our theoretical contribution is summarized as follows. 
We prove that AsyB-ProxSGD can converge to stationary points of the nonconvex and nonsmooth problem \eqref{eq:original} with an ergodic convergence rate of $O(1/\sqrt{K})$, where $K$ is the total number of times that any block in $x$ is updated. This rate matches the convergence rate known for asynchronous SGD. The latter, however, is suitable only for smooth problems. To our best knowledge, this is the first work that provides convergence rate guarantees for ProxSGD in a model parallel mode, especially in an asynchronous setting. We also provide a linear speedup guarantee as the number of workers increases, provided that the number of workers is bounded by $O({K}^{1/4})$. This result has laid down a theoretical ground for the scalability and performance of AsyB-ProxSGD in practice. Evaluation based on a real-world dataset involving both a large model and a large dataset has corroborated our theoretical findings on the convergence and speedup behavior of AsyB-ProxSGD, under a Parameter Server implementation.  

\section{Preliminaries}
\label{sec:prelim}

In this section, we first introduce some notations to be used throughout the paper. Then we introduce the stochastic optimization problem to be studied. Finally, we introduce proximal operators and enumerate fundamental assumptions made in the model.

We use $\norm{x}$ to denote the $\ell_2$ norm of the vector $x$, and $\innerprod{x, y}$ to denote the inner product of two vectors $x$ and $y$. We use $g(x)$ to denote the ``true'' gradient $\nabla f(x)$ and use $G(x;\xi)$ to denote the stochastic gradient $\nabla F(x;\xi)$ for a function $f(x)$. 
Let $\nabla_j$ be the derivative w.r.t. $x_j$, the $j$-th coordinate of $x$; and let $G_j$ and $g_j$ represent $\nabla_j F(x; \xi)$ and $\nabla_j f(x)$, respectively. For a random variable or vector $X$, let $\mathbb{E}[X|\mathcal{F}]$ be the conditional expectation of $X$ w.r.t. a sigma algebra $\mathcal{F}$.

\subsection{Stochastic Optimization Problems}
In this paper, we consider the following \emph{stochastic} optimization problem instead of the original deterministic version \eqref{eq:original}:
\begin{equation}
\begin{split}
	\mathop{\min}_{x\in \real{d}} &\quad \Psi(x):= \mathbb{E}_\xi [F(x; \xi)] + h(x),
\end{split}
\label{eq:stochastic}
\end{equation}
where the stochastic nature comes from the random variable $\xi$, which in our problem settings, represents a random index selected from the training set $\{1, \ldots, n\}$. Therefore, \eqref{eq:stochastic} attempts to minimize the expected loss of a training sample plus a regularizer $h(x)$. When it comes to large models, we decompose $x$ into $M$ blocks, and rewrite \eqref{eq:stochastic} into the following block optimization form:
\begin{equation}
\mathop{\min}_{x \in \real{d}} \Psi(x) := \mathbb{E}_\xi [F(x_1,\ldots,x_M; \xi)] + \sum_{j=1}^M h_j(x_j),
\label{eq:stochastic_block}
\end{equation}
where $x=(x_1, \ldots, x_M)$, $x_j \in \real{d_j}$ for those $d_j \in \mathbb{N}_+$ and $\sum_{j=1}^M d_j = d$, and $h(x)= \sum_{j=1}^M h_j(x_j)$.

In this work, we assume all $h_j$'s for problem~\eqref{eq:stochastic_block} are proper, closed and convex, yet \emph{not necessarily smooth}. To handle the potential non-smoothness, we introduce the following generalized notion of derivatives to be used in convergence analysis.

\begin{definition}[Subdifferential e.g.,~\cite{parikh2014proximal}]
	We say a vector $p \in \real{d}$ is a subgradient of the function $h: \real{d} \to \real{}$ at $x \in \mathrm{dom}\ h$, if for all $z \in \mathrm{dom}\ h$,
	\begin{equation}
		h(z) \geq h(x) + \innerprod{p, z-x}.
	\end{equation}
Moreover, denote the set of all such subgradients at $x$ by $\partial h(x)$, which is called the subdifferential of $h$ at $x$.
\end{definition}
For problems~\eqref{eq:stochastic} and ~\eqref{eq:stochastic_block}, we define the critical point as follows:
\begin{definition}[Critical point \cite{attouch2013convergence}]
	A point $x\in \real{d}$ is a critical point of $\Psi$, iff $0 \in \nabla f(x) + \partial h(x)$.
\end{definition}

\subsection{Proximal Gradient Descent}

The proximal operator is fundamental to many algorithms to solve problem \eqref{eq:original} as well as its stochastic variants \eqref{eq:stochastic} and \eqref{eq:stochastic_block}.

\begin{definition}[Proximal operator]
	The proximal operator $\prox{}$ of a point $x \in \real{d}$ under a proper and closed function $h$ with parameter $\eta > 0$ is defined as:
\begin{equation}
	\prox{\eta h}(x) = \mathop{\arg \min}_{y \in \real{d}} \left\{h(y) + \frac{1}{2\eta} \norm{y-x}^2\right\}.
\end{equation}
\end{definition}
In its vanilla version, \emph{proximal gradient descent} performs the following iterative updates:
\begin{equation*}
	x^{k+1} \gets \prox{\eta_k h}(x^{k} - \eta_k \nabla f(x^k)),
\end{equation*}
for $k=1,2,\ldots$, where $\eta_k > 0$ is the step size at iteration $k$. 

To solve stochastic optimization problems \eqref{eq:stochastic} and \eqref{eq:stochastic_block}, we need a variant called \emph{proximal stochastic gradient descent} (ProxSGD), with its update rule at iteration $k$ given by
\begin{equation*}
	x^{k+1} \gets \prox{\eta_k h}\left( x^{k} - \frac{\eta_k}{\lvert \Xi_k \rvert}\sum_{\xi \in \Xi_k}\nabla F(x^k; \xi) \right).
\end{equation*}
In ProxSGD, the gradient $\nabla f$ is replaced by the gradients from a random subset of training samples, denoted by $\Xi_k$ at iteration $k$. Since $\xi$ is a random variable indicating a random index in $\{1,\ldots, n\}$, $F(x; \xi)$ is a random loss function for the random sample $\xi$, such that $f(x) := \mathbb{E}_\xi [F(x; \xi)]$.

With these definitions, we now introduce our metric used in ergodic convergence analysis:
\begin{equation*}
	P(x,g,\eta) := \frac{1}{\eta}(x - \prox{\eta h}(x - \eta g)),
\end{equation*}
which is also called the \emph{gradient mapping} in the literature, e.g., \cite{parikh2014proximal}. For non-convex problems, it is a standard approach to measure convergence (to a stationary point) by gradient mapping according to the following lemma:
\begin{lemma}[Non-convex Convergence \cite{attouch2013convergence}]
	A point $x$ is a critical point of \eqref{eq:stochastic} iff. $P(x,g,\eta) = 0$.
\end{lemma}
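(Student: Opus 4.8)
The plan is to reduce the claim to the standard fixed-point characterization of the proximal operator. Since $\eta > 0$ and $P(x,g,\eta) = \frac{1}{\eta}\big(x - \prox{\eta h}(x - \eta g)\big)$, the equation $P(x,g,\eta) = 0$ holds if and only if $x = \prox{\eta h}(x - \eta g)$. With $g = \nabla f(x)$, it therefore suffices to prove that
\begin{equation*}
	x = \prox{\eta h}(x - \eta g) \quad\Longleftrightarrow\quad 0 \in \nabla f(x) + \partial h(x).
\end{equation*}

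First I would derive the general characterization of the prox map. Fix $v \in \real{d}$ and consider the objective $\phi(z) := h(z) + \frac{1}{2\eta}\norm{z-v}^2$ defining $\prox{\eta h}(v)$. Since $h$ is proper, closed and convex and the quadratic term is finite, continuous and convex everywhere, $\phi$ is strongly convex, so it has a unique minimizer and $\prox{\eta h}$ is single-valued. By the Moreau--Rockafellar sum rule (no constraint qualification is needed because the quadratic term is finite everywhere), $\partial \phi(z) = \partial h(z) + \frac{1}{\eta}(z - v)$, and a convex function attains its minimum at $y$ precisely when $0 \in \partial\phi(y)$. Hence
\begin{equation*}
	y = \prox{\eta h}(v) \quad\Longleftrightarrow\quad \tfrac{1}{\eta}(v - y) \in \partial h(y),
\end{equation*}
where the backward implication also uses uniqueness of the minimizer.

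Then I would specialize this equivalence with $v = x - \eta g$ and $y = x$: the right-hand side becomes $\frac{1}{\eta}\big((x - \eta g) - x\big) = -g \in \partial h(x)$, i.e.\ $-\nabla f(x) \in \partial h(x)$, which is exactly the condition $0 \in \nabla f(x) + \partial h(x)$ defining a critical point of \eqref{eq:stochastic}. Chaining the three equivalences yields the lemma. I do not expect a genuine obstacle here; the only step requiring a little care is invoking the subdifferential sum rule and the optimality condition $0 \in \partial\phi(y)$, which are legitimate thanks to the convexity of $h$ and the fact that the proximal objective is strongly convex and finite everywhere except on $\mathrm{dom}\,h$.
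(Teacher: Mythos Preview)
Your argument is correct: the equivalence $P(x,g,\eta)=0 \Leftrightarrow x=\prox{\eta h}(x-\eta g) \Leftrightarrow -\nabla f(x)\in\partial h(x)$ is exactly the standard fixed-point characterization of the proximal map, and your justification via the optimality condition $0\in\partial h(y)+\tfrac{1}{\eta}(y-v)$ for the strongly convex proximal objective is sound.

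The paper itself does not prove this lemma; it simply cites it as a known fact from \cite{attouch2013convergence}. So there is no proof in the paper to compare against, and your self-contained derivation actually supplies what the paper omits.
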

Therefore, we can use the following definition as a convergence metric:
\begin{definition}[Iteration complexity \cite{reddi2016proximal}]
	A solution $x$ is called $\epsilon$-accurate, if $\mathbb{E}[\norm{P(x,g,\eta)}^2] \leq \epsilon$ for some $\eta > 0$. If an algorithm needs at least $K$ iterations to find an $\epsilon$-accurate solution, its iteration complexity is $K$.
\end{definition}

We make the following assumptions throughout the paper. Other algorithm specific assumptions will be introduced later in the corresponding sections.

We assume that $f(\cdot)$ is a smooth function with the following properties:
\begin{assumption}[Lipschitz Gradient]
	For function $f$ there are Lipschitz constants $L_j, L>0$ such that
	\begin{align}
		\norm{\nabla f(x) - \nabla f(y)} \leq L \norm{x - y}, \forall x, y \in \real{d}, \\
		\norm{\nabla_j f(x) - \nabla_j f(x+\alpha e_j)} \leq L_{\max} |\alpha|, \forall x \in \real{d},
	\end{align}
	where $e_j$ is an indicator vector that is only valid at block $j$ with value 1, and vanishes to zero at other blocks. Clearly we have $ L_{\max} \leq L$.
	\label{asmp:smooth}
\end{assumption}
As discussed above, assume that $h$ (or $h_j$) is a proper, closed and convex function, which is yet not necessarily smooth.
If the algorithm has been executed for $k$ iterations, we let $\mathcal{F}_k$ denote the set that consists of all the samples used up to iteration $k$. Since $\mathcal{F}_k \subseteq \mathcal{F}_{k'}$ for all $k \leq k'$, the collection of all such $\mathcal{F}_k$ forms a \emph{filtration}. Under such settings, we can restrict our attention to those stochastic gradients with an unbiased estimate and bounded variance, which are common in the analysis of \emph{stochastic} gradient descent or \emph{stochastic} proximal gradient algorithms, e.g., \cite{lian2015asynchronous, ghadimi2016mini}.
\begin{assumption}[Unbiased gradient]
	For any $k$, we have $\mathbb{E}[G_k | \mathcal{F}_{k}] = g_k$.
	\label{asmp:unbias_grad}
\end{assumption}

\begin{assumption}[Bounded variance]
	The variance of the stochastic gradient is bounded by $\mathbb{E}[\norm{G(x;\xi) - \nabla f(x)}^2] \leq \sigma^2$.
	\label{asmp:var_grad}
\end{assumption}

\subsection{Parallel Stochastic Optimization}

Recent years have witnessed rapid development of parallel and distributed computation frameworks for large-scale machine learning problems. One popular architecture is called \emph{parameter server} \cite{dean2012large,li2014scaling}, which consists of some worker nodes and server nodes. In this architecture, one or multiple master machines play the role of parameter servers, which maintain the model $x$. All other machines are \emph{worker nodes} that communicate with servers for training machine learning models. In particular, each worker has two types of requests: \texttt{pull} the current model $x$ from servers, and \texttt{push} the computed gradients to servers.

Before proposing our AsyB-ProxSGD algorithm in the next section, let us first introduce its \emph{synchronous} version. Suppose we execute ProxSGD with a minibatch of 128 random samples on 8 workers and our goal is to use these 8 workers to train a model with 8 blocks\footnote{For brevity, we assume that each server hosts only one block, and we will say server $j$ and block $j$ interchangeably in this paper.}. We can let each worker randomly take 128 samples, compute a summed partial gradient w.r.t one block (say, block $j$) on them, and push it to server $j$. In synchronous case, server $j$ will finally receive 8 summed partial gradients on 8 blocks (each partial gradient contains information of 128 samples) in each iteration. This server then updates the model by performing the proximal gradient descent step. In general, if we divide the model into $M$ blocks, each worker will be assigned to update only one block using a minibatch of $N$ samples and they can do updating in parallel in an iteration.

Note that in this scenario, all workers have to calculate the whole minibatch of the computation. Thanks to the decomposition property of proximal operator \cite{parikh2014proximal}, updating blocks can be done in parallel by servers, which corresponds to \emph{model parallelism} in the literature (e.g., \cite{recht2011hogwild,zhou2016convergence,pan2016cyclades}). Another type of parallelism is called \emph{data parallelism}, in which each worker uses only part of $N$ random samples in the minibatch to compute a full gradient on $x$ (e.g., \cite{agarwal2011distributed,ho2013more}). We handle the issue of large $n$ by using stochastic algorithms, and our main focus is to handle the large model challenge by model parallelism.

\section{AsyB-ProxSGD: Asynchronous Block Proximal Stochastic Gradient}
\label{sec:aspg}

We now present our main contribution in this paper, \emph{Asynchronous Block Proximal Stochastic Gradient} (AsyB-ProxSGD) algorithm. Recall that asynchronous algorithm tries to alleviate random delays in computation and communication in different iterations. When model is big, it is hard to put the whole model in a single node (a single machine or device), and we have to split it into $M$ blocks. In this case, no single node maintains all of the parameters in memory and the nodes can update in parallel. The idea of model parallelism has been used in many applications, including deep learning \cite{dean2012large} and factorization machine \cite{li2016difacto}.

We now formally introduce how our proposed algorithm works. The main idea of our proposed algorithm is to update block $x_j$ in parallel by different workers. In Algorithm~\ref{alg:asyn-prox-scd}, the first step is to ensure that the staleness is upper bounded by $T$, which is essential to ensure convergence. 
Here we use $\hat{x}$ to emphasize that the pulled model parameters $x$ may not be consistent with that stored on parameter servers. Since blocks are scattered on multiple servers, different blocks may be not consistent with updates and thus results in different delays. For example, suppose the server stores model $x=(x_1, x_2)$, and we have two workers that updates $x_1$ and $x_2$ in parallel. Our expectation is that $x$ is updated by them and it becomes $x'=(x_1', x_2')$. However, in partially asynchronous protocol (PAP) where workers may skip synchronization, the following case may happen. At time 1, worker 1 pushes $x_1'$ and pulls $x_2$; thus, worker 1 gets $(x_1', x_2)$. At time 2, worker 2 pushes $x_2'$ and pulls $x_1'$; thus, worker 2 gets $(x_1', x_2')$. We can see that the next update by worker 1 is based on $(x_1', x_2)$, which has different delays on two blocks.

Let us discuss this in more implementation details for distributed clusters. In distributed clusters, we split a large model $x$ into $M$ blocks, and one server only maintains a single block $x_j$ to achieve model parallelism. Thus, different block may be updated at different iterations by different workers. The same phenomenon also exist in shared memory systems (i.e., a single machine with multiple CPU cores or GPUs, etc.). In these systems, the model is stored on the main memory and we can regard it as a ``logical'' server. In these systems, ``reading'' and ``writing'' can be done simultaneously, thus block $x_j$ may be ``pulled'' while it is being updated. In summary, model parameters $x$ may be inconsistent with any actual state on the server side.

In our algorithm, workers can update multiple blocks in parallel, and this is the spirit of model parallelism here. However, we note that on the server side, \texttt{push} request is usually more time consuming than \texttt{pull} request since it needs additional computations of the proximal operator. 
Therefore, we should let workers gather more stochastic gradients before pushing to the sever, and that is the reason we let each worker to compute gradients on all $N$ samples in a minibatch. That is, a worker iteration $t$ should compute
\[\hat{G}_{j_t}^t := \frac{1}{N}\sum_{i=1}^N \hat{G}_{j_t}(\hat{x}^t;\xi_{i,t}),\]
where $j_t$ is the index of block to be updated at iteration $t$, and $\hat{G}_{j_t}(\hat{x}^t;\xi_{i,t})$ is the partial gradient w.r.t. block $j_t$ at model $\hat{x}^t$ pulled at iteration $t$ and on sample $\xi_{i,t}$.


\begin{algorithm}[thpb]
\caption{AsyB-ProxSGD: Block PAP Stochastic Gradient}
\label{alg:asyn-prox-scd}
\underline{\textbf{Server $j$ executes:}}
\begin{algorithmic}[1]
	\State Initialize $x^0$.
	\Loop
		\If {\texttt{Pull Request} from a worker is received}
			\State Send $x_j$ to the worker.
		\EndIf
		\If {\texttt{Push Request} (gradient $G_j$) from a worker is received}
			\State $x_j \gets \prox{\eta h_j}(x_j - \eta G_j)$.
		\EndIf
	\EndLoop
\end{algorithmic}
\underline{\textbf{Worker asynchronously performs on block $j$:}}
\begin{algorithmic}[1]
\State Pull $x^0$ to initialize.
\For {$t=0,1,\ldots$}
	\State Wait until all iterations before $t-T$ are finished at all workers.
	\State Randomly choose $N$ training samples indexed by $\xi_{t,1},\ldots,\xi_{t,N}$.
	\State Calculate $G^t_j = \frac{1}{N}\sum_{i=1}^{N} \nabla_j F(x^t; \xi_{t, i})$.
	\State Push $G^t_j$ to server $j$.
	\State Pull the current model $x$ from servers: $x^{t+1} \gets x$.
\EndFor
\end{algorithmic}
\end{algorithm}


\section{Convergence Analysis}
\label{sec:theory}

To facilitate the analysis of Algorithm~\ref{alg:asyn-prox-scd}, we rewrite it in an equivalent global view (from the server's perspective), as described in Algorithm~\ref{alg:apscd-global}. In this algorithm, we define one \emph{iteration} as the time to update any \emph{single} block of $x$ and to successfully store it at the corresponding server. We use a counter $k$ to record how many times the model $x$ has been updated; $k$ increments every time a push request (model update request) is completed for a \emph{single} block. Note that such a counter $k$ is \emph{not} required by workers to compute gradients and is different from the counter $t$ in Algorithm~\ref{alg:asyn-prox-scd}---$t$ is maintained by each worker to count how many sample gradients have been computed locally.

In particular, for every worker, it takes $N$ stochastic sample gradients and aggregates them by averaging:
{
\begin{equation}
	\hat{G}_{j_k}^k := \frac{1}{N}\sum_{i=1}^N \nabla_{j_k} F(
		x^{k-\mathbf{d}_k}; \xi_{k, i}),
	\label{eq:grad_aspg}
\end{equation}
}
where $j_k$ is the random index chosen at iteration $k$, $\mathbf{d}_k=(d_{k,1},\ldots,d_{k, M})$ denotes the \emph{delay vector}, i.e., the delays of different blocks in $\hat{x}^k$ when computing the gradient for sample $\xi_{k, i}$ at iteration $k$, and $d_{k,j}$ is the delay of a specific block $x_j$. In addition, we denote $\hat{x}:=x^{k-\mathbf{d}_k} := (x_1^{k-d_{k,1}}, \ldots, x_M^{k-d_{k, M}})$ as a vector of model parameters pulled from the server side. Then, the server updates $x^k$ to $x^{k+1}$ using proximal gradient descent.


\begin{algorithm}[htpb]
\caption{AsyB-ProxSGD (from a Global Perspective)}
\label{alg:apscd-global}
\begin{algorithmic}[1]
\State Initialize $x^1$.
\For {$k=1,\ldots, K$}
	\State {Randomly select $N$ training samples indexed by $\xi_{k, 1}, \ldots, \xi_{k, N}$.}
	\State {Randomly select a coordinate index $j_k$ from $\{1, \ldots, M\}$.}
	\State {Calculate the averaged gradient $\hat{G}_{j_k}^k$ according to \eqref{eq:grad_aspg}.}
	\For {$j=1,\ldots, M$}
		\If {$j = j_k$}
			\State {$x_j^{k+1} \gets \prox{\eta_k h_j}(x_j^{k} - \eta_k \hat{G}_j^{k})$.}
		\Else
			\State {$x_j^{k+1} \gets x_j^k $.}
		\EndIf
	\EndFor
\EndFor
\end{algorithmic}
\end{algorithm}

\subsection{Assumptions and Metrics}
To analyze Algorithm \eqref{alg:apscd-global}, we make the following common assumptions on the delay and independence \cite{recht2011hogwild,liu2015asynchronous,avron2015revisiting}:
\begin{assumption}[Bounded delay]
	There exists an constant $T$ such that for all $k$, all values in delay vector $\mathbf{d}_k$ are upper bounded by $T$: $0\leq d_{k,j} \leq T$ for all $j$.
	\label{asmp:bound1}
\end{assumption}
\begin{assumption}[Independence]
	All random variables including selected indices $\{j_k\}$ and samples $\{\xi_{k,i}\}$ for all $k$ and $i$ in Algorithm~\ref{alg:apscd-global} are mutually independent.
	\label{asmp:indie1}
\end{assumption}

The assumption of bounded delay is to guarantee that gradients from workers should not be too old. Note that the maximum delay $T$ is roughly \emph{proportional to the number of workers} in practice. We can enforce all workers to wait for others if it runs too fast, like step 3 of workers in Algorithm~\ref{alg:asyn-prox-scd}. This setting is also called \emph{partially synchronous parallel} \cite{ho2013more,li2014communication,zhou2016convergence} in the literature.
Another assumption on independence can be met by selecting samples with \emph{replacement}, which can be implemented using some distributed file systems like HDFS \cite{borthakur2008hdfs}. These two assumptions are common in convergence analysis for asynchronous parallel algorithms, e.g., \cite{lian2015asynchronous,davis2016sound}.

\subsection{Theoretical Results}
We present our main convergence theorem as follows:

\begin{theorem}
	If the step length sequence $\{\eta_k\}$ in Algorithm~\ref{alg:asyn-prox-scd} satisfies
	{
	\begin{equation}
		\eta_k \leq \frac{1}{16L_{\max}},\quad 6\eta_k L^2 T \sum_{l=1}^T \eta_{k+l} \leq M^2,
	\end{equation}
	}
	for all $k=1,2,\ldots, K$, we have the following ergodic convergence rate for Algorithm ~\ref{alg:apscd-global}:
	{\small
	\begin{equation}
	\begin{split}
&\quad\ \frac{\sum_{k=1}^K (\eta_k - 8L_{\max}\eta_k^2) \norm{P(x^k, g^k, \eta_k)}^2}{\sum_{k=1}^K \eta_k - 8L_{\max}\eta_k^2} \\
& \leq \frac{8M(\Psi(x^1) - \Psi(x^*))}{\sum_{k=1}^K \eta_k - 8L_{\max}\eta_k^2} \\
&\quad + \frac{8M \sum_{k=1}^K \left( \frac{L\eta_k^2}{MN} + \frac{3\eta_kL^2 T\sum_{l=1}^T \eta_{k-l}^2}{2M^3 N} \right) \sigma^2}{\sum_{k=1}^K \eta_k - 8L_{\max}\eta_k^2},
	\end{split}
	\end{equation}
	}
	where the expectation is taken in terms of all the random variables in Algorithm~\ref{alg:apscd-global}.
	\label{thm:apscd_convergence}
\end{theorem}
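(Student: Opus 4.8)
The plan is to follow the standard descent-lemma approach for proximal stochastic gradient methods, but carefully tracking the two sources of error that are specific to this setting: the \emph{block coordinate} randomness (only one block $j_k$ is updated per iteration, so each block is touched roughly $K/M$ times, which is where the factors of $M$ come from) and the \emph{staleness} in the gradient $\hat G^k_{j_k}$ evaluated at $x^{k-\mathbf d_k}$ rather than $x^k$. First I would establish a one-step descent inequality. Writing $x^{k+1}_{j_k} = \prox{\eta_k h_{j_k}}(x^k_{j_k} - \eta_k \hat G^k_{j_k})$ and using the Lipschitz-gradient Assumption~\ref{asmp:smooth} (more precisely the blockwise constant $L_{\max}$, since only block $j_k$ moves), together with the firm nonexpansiveness / prox inequality $h_{j_k}(x^{k+1}_{j_k}) + \frac{1}{2\eta_k}\norm{x^{k+1}_{j_k}-x^k_{j_k}}^2 + \innerprod{\hat G^k_{j_k}, x^{k+1}_{j_k}-x^k_{j_k}} \le h_{j_k}(x^k_{j_k})$, I would get a bound of the form
\[
\Psi(x^{k+1}) \le \Psi(x^k) - \Big(\tfrac{1}{\eta_k} - \tfrac{L_{\max}}{2}\Big)\norm{x^{k+1}_{j_k}-x^k_{j_k}}^2 + \innerprod{g^k_{j_k} - \hat G^k_{j_k}, x^{k+1}_{j_k}-x^k_{j_k}}.
\]
Taking conditional expectation over $j_k$ (uniform on $\{1,\dots,M\}$) converts the single-block quantities into $\frac1M$ times the full-vector analogues, which is how $\norm{P(x^k,g^k,\eta_k)}^2$ enters with its $\frac1M$ weight.

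Next I would control the cross term $\innerprod{g^k_{j_k} - \hat G^k_{j_k}, x^{k+1}_{j_k}-x^k_{j_k}}$. Split it as $\innerprod{g^k_{j_k} - \bar g^k_{j_k}, \cdot} + \innerprod{\bar g^k_{j_k} - \hat G^k_{j_k}, \cdot}$ where $\bar g^k = \nabla f(x^{k-\mathbf d_k})$ is the true gradient at the stale point. The second piece has zero conditional mean by the unbiasedness Assumption~\ref{asmp:unbias_grad}, so after a Young's inequality its contribution is bounded by $\frac{\eta_k}{N}\sigma^2$-type terms using the bounded-variance Assumption~\ref{asmp:var_grad} and the fact that we average $N$ i.i.d.\ samples — this produces the $\frac{L\eta_k^2}{MN}\sigma^2$ term. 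The first piece is the \emph{staleness} term: by Lipschitz continuity $\norm{g^k - \bar g^k} \le L\norm{x^k - x^{k-\mathbf d_k}} \le L\sum_{\text{recent}} \norm{x^{l+1}-x^l}$, and each $\norm{x^{l+1}-x^l}$ is a single-block update step. Using Assumption~\ref{asmp:bound1} (delays $\le T$) and Cauchy–Schwarz over the at-most-$T$ intervening iterations, I would bound this by a weighted sum of past step-squared-norms $\norm{x^{l+1}_{j_l}-x^l_{j_l}}^2$ plus a residual variance contribution; this is exactly where the $\eta_k L^2 T \sum_{l} \eta_{k\pm l}$ coefficients and the $M^2$, $M^3$ denominators materialize.

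Then I would telescope. Summing the per-iteration inequality over $k=1,\dots,K$, the $\Psi(x^k)$ terms collapse to $\Psi(x^1)-\Psi(x^{K+1}) \le \Psi(x^1)-\Psi(x^*)$. The delicate point is that the staleness bound at iteration $k$ charges step-norms $\norm{x^{l+1}-x^l}^2$ for $l$ near $k$, so when I sum over $k$ each step-norm $\norm{x^{l+1}-x^l}^2$ gets charged by at most $T$ different $k$'s; I need the coefficient it accumulates, roughly $6\eta_l L^2 T \sum_{r=1}^T \eta_{l+r}/M^2$, to be dominated by the "good" negative coefficient $(\frac{1}{\eta_l}-\frac{L_{\max}}{2})$ (after the $\frac1M$ from block selection), which is precisely the role of the two stepsize conditions $\eta_k \le \frac{1}{16L_{\max}}$ and $6\eta_k L^2 T\sum_{l=1}^T \eta_{k+l} \le M^2$. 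Once the net coefficient on $\sum_k \norm{x^{k+1}_{j_k}-x^k_{j_k}}^2$ is nonnegative (so those terms can be dropped, or kept as a nonnegative slack), and I relate $\mathbb E\norm{x^{k+1}_{j_k}-x^k_{j_k}}^2$ back to $\frac{\eta_k^2}{M}\norm{P(x^k,g^k,\eta_k)}^2$ up to variance, rearranging and dividing by $\sum_k(\eta_k - 8L_{\max}\eta_k^2)$ yields the claimed bound. I expect the main obstacle to be the bookkeeping in this telescoping step — keeping the stale-gradient cross terms, the $N$-sample variance reduction, and the block-selection $\frac1M$ factors all consistent so that the constants line up exactly with the two stepsize conditions; the individual inequalities are routine, but the combinatorics of "which step-norm is charged by which iteration, and with what cumulative weight" is where errors are easy to make.
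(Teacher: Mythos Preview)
Your overall plan matches the paper's: a one-step descent inequality with the blockwise Lipschitz constant $L_{\max}$, a split of the gradient error into a staleness part $g^k-\hat g^k$ and a stochastic part $\hat g^k-\hat G^k$, a Lipschitz bound on the staleness in terms of the last $T$ step norms, and a telescoping argument in which the stepsize condition $6\eta_k L^2 T\sum_{l=1}^T\eta_{k+l}\le M^2$ ensures the accumulated delay charges are absorbed by the negative term. That is exactly the paper's route.

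There is one genuine slip. You write that $\innerprod{\bar g^k_{j_k}-\hat G^k_{j_k},\,x^{k+1}_{j_k}-x^k_{j_k}}$ ``has zero conditional mean by the unbiasedness Assumption''. It does not: $x^{k+1}_{j_k}$ is the prox step taken \emph{with} $\hat G^k_{j_k}$, so the two factors are correlated. The paper fixes this by introducing an auxiliary point $\bar x^{k+1}_{j} := \prox{\eta_k h_j}(x^k_j-\eta_k\hat g^k_j)$, the prox step with the \emph{true} delayed gradient. Then the cross term $\innerprod{\hat g^k-\hat G^k,\,\bar x^{k+1}-x^k}$ genuinely has zero conditional mean, while the residual $\innerprod{\hat g^k-\hat G^k,\,x^{k+1}-\bar x^{k+1}}$ is controlled by the nonexpansiveness bound $\norm{x^{k+1}-\bar x^{k+1}}\le\eta_k\norm{\hat G^k-\hat g^k}$ (Lemma~\ref{lem:lem-2}). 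This three-point comparison (Corollary~\ref{cor:grad_diff} in the paper) also lets the descent lemma be phrased directly in terms of $\norm{P(x^k,\hat g^k,\eta_k)}^2$, after which the paper uses $\norm{P(x^k,\hat g^k,\eta_k)}^2\ge\tfrac12\norm{P(x^k,g^k,\eta_k)}^2-\norm{g^k-\hat g^k}^2$ and telescopes gradient mappings rather than raw step norms. Your version can be repaired the same way; without the auxiliary point, the Young-inequality route you sketch still works but costs an extra $\tfrac{1}{2c}\norm{x^{k+1}-x^k}^2$ that must be absorbed, and the ``zero mean'' justification should be dropped.
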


Taking a closer look at Theorem~\ref{thm:apscd_convergence}, we can properly choose the step size $\eta_k$ as a constant value and obtain the following results on convergence rate:
\begin{corollary}
	Let the step length be a constant, i.e., 
	{\small
	\begin{equation}
	    \eta := \sqrt{\frac{(\Psi(x_1) - \Psi(x^*))M N}{LK\sigma^2}}.
	\end{equation}
	}
	If the delay bound $T$ satisfies
	{
	\begin{equation}
	    K \geq \frac{128(\Psi(x_1) - \Psi(x_*)) N L}{M^3 \sigma^2} (T+1)^4,
	    \label{eq:T_bound}
	\end{equation}
	}
	then the output of Algorithm~\ref{alg:apscd-global} satisfies the following ergodic convergence rate as
	{\small
	\begin{equation}
	\begin{split}
		&\quad \mathop{\min}_{k=1,\ldots,K} \mathbb{E}[\norm{P(x_k, g_k, \eta_k)}^2] \\
		&\leq \frac{1}{K}\sum_{k=1}^K \mathbb{E}[\norm{P(x_k, g_k, \eta_k)}^2] \\
		&\leq 32\sqrt{\frac{2(\Psi(x_1) - \Psi(x_*))LM\sigma^2}{KN}}.
	\end{split}
	\label{eq:convergence_1}
	\end{equation}
	}
	\label{corr:apscd_convergence}
\end{corollary}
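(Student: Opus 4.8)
The plan is to specialize Theorem~\ref{thm:apscd_convergence} to the constant step size $\eta_k\equiv\eta$ and then carry out the algebraic substitution; write $\Delta:=\Psi(x_1)-\Psi(x^*)$ throughout. First I would note that since $\eta\le\frac{1}{16L_{\max}}<\frac{1}{8L_{\max}}$, every weight $\eta_k-8L_{\max}\eta_k^2$ in the left-hand side of Theorem~\ref{thm:apscd_convergence} equals the same positive constant, so the weighted average collapses to the plain average $\frac{1}{K}\sum_{k=1}^K\|P(x^k,g^k,\eta)\|^2$; moreover $\sum_{k=1}^K(\eta_k-8L_{\max}\eta_k^2)=K\eta(1-8L_{\max}\eta)\ge K\eta/2$ by the same bound on $\eta$. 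The elementary inequality $\min_k(\cdot)\le\frac{1}{K}\sum_k(\cdot)$ then already gives the first line of \eqref{eq:convergence_1}, so it only remains to upper-bound the right-hand side.

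Next I would bound the right-hand side of Theorem~\ref{thm:apscd_convergence}. Using $\sum_{l=1}^T\eta_{k-l}^2\le T\eta^2$ together with the denominator estimate above, the right-hand side is at most
\[
\frac{16M\Delta}{K\eta}+\frac{16L\sigma^2\eta}{N}+\frac{24L^2T^2\sigma^2\eta^2}{M^2N}.
\]
Substituting $\eta=\sqrt{\Delta MN/(LK\sigma^2)}$ makes the first two terms equal, each being $16\sqrt{\Delta LM\sigma^2/(KN)}$, so they sum to $32\sqrt{\Delta LM\sigma^2/(KN)}$, while the third (delay) term becomes $24LT^2\Delta/(MK)$. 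The remaining task is to show that the hypothesis \eqref{eq:T_bound} forces this delay term below $32(\sqrt2-1)\sqrt{\Delta LM\sigma^2/(KN)}$, after which the total is at most $32\sqrt2\,\sqrt{\Delta LM\sigma^2/(KN)}$, which is exactly the claimed bound. Squaring both sides, this reduces to a requirement of the form $K\gtrsim LT^4\Delta N/(M^3\sigma^2)$, which is implied with room to spare by $K\ge 128\Delta NL(T+1)^4/(M^3\sigma^2)$ since $(T+1)^4\ge T^4$ and the constant $128$ dwarfs what is needed. Finally I would check the two preconditions of Theorem~\ref{thm:apscd_convergence}, namely $\eta\le\frac{1}{16L_{\max}}$ and $6\eta L^2T\sum_{l=1}^T\eta\le M^2$; both follow from \eqref{eq:T_bound}, which makes $\eta$ small (it yields $\eta^2\le M^4/(128L^2(T+1)^4)$).

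The only part requiring genuine care is this last absorption step: one must confirm that the fourth-power dependence on $T+1$ in \eqref{eq:T_bound} is precisely calibrated so that the delay-induced term $24LT^2\Delta/(MK)$ — which scales like $1/K$, not $1/\sqrt K$ — stays dominated by the $O(1/\sqrt K)$ leading term, and that the two step-size conditions of Theorem~\ref{thm:apscd_convergence} are not violated by the constant choice of $\eta$. Everything else is routine substitution, together with using $\eta\le 1/(16L_{\max})$ to convert $1-8L_{\max}\eta$ into the harmless constant factor $2$.
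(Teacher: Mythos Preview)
Your proposal is correct and follows essentially the same route as the paper: specialize Theorem~\ref{thm:apscd_convergence} to constant $\eta$, use $\eta\le 1/(16L_{\max})$ to replace $\eta-8L_{\max}\eta^2$ by $\eta/2$, arrive at the three-term bound $\frac{16M\Delta}{K\eta}+\frac{16L\eta\sigma^2}{N}+\frac{24L^2T^2\eta^2\sigma^2}{M^2N}$, then use \eqref{eq:T_bound} to kill the delay term and substitute $\eta$. The only cosmetic difference is that the paper absorbs the delay term by showing $\frac{3LT^2\eta}{2M^2}\le 1$ (so the third term is dominated by the second, giving $\frac{16M\Delta}{K\eta}+\frac{32L\eta\sigma^2}{N}$), whereas you substitute $\eta$ first in the two leading terms and then bound the delay contribution $24LT^2\Delta/(MK)$ directly against the slack $32(\sqrt2-1)\sqrt{\Delta LM\sigma^2/(KN)}$; both reductions rest on exactly the same consequence of \eqref{eq:T_bound}, namely $\eta^2\le M^4/(128L^2(T+1)^4)$.
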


\textbf{Remark 1. }(Linear speedup w.r.t. the staleness)
	When the maximum delay $T$ is bounded by $O(K^{1/4})$, we can see that the gradient mapping $\mathbb{E}[P(x,g,\eta)]$ decreases regardless of $T$, and thus linear speedup is achievable (if other parameters are constants). In other words, we can see that by \eqref{eq:T_bound} and \eqref{eq:convergence_1}, as long as $T$ is no more than $O(K^{1/4})$, the iteration complexity (from a global perspective) to achieve $\epsilon$-optimality is $O(1/\epsilon^2)$, which is independent from $T$. 

\textbf{Remark 2. }(Linear speedup w.r.t. number of workers)
	We note that the delay bound $T$ is roughly proportional to the number of workers, so the total iterations w.r.t. $T$ can be an indicator of convergence w.r.t. the number of workers. As the iteration complexity is $O(1/\epsilon^2)$ to achieve $\epsilon$-optimality, and it is independent from $T$, we can conclude that the total iterations will be shortened to $1/T$ of a single worker's iterations if $\Theta(T)$ workers work in parallel. This shows that our algorithm nearly achieves linear speedup.

\textbf{Remark 3. }(Consistency with ProxSGD)
	When $T=0$, our proposed AsyB-ProxSGD reduces to the vanilla proximal stochastic gradient descent (ProxSGD) (e.g., \cite{ghadimi2016mini}). Thus, the iteration complexity is $O(1/\epsilon^2)$ according to \eqref{eq:convergence_1}, attaining the same result as that in \cite{ghadimi2016mini} \emph{yet without assuming increased minibatch sizes}.




\section{Experiments}
\label{sec:simu}

\begin{figure*}[t]
  \centering
  \subfigure[Iteration vs. Objective]{
    \includegraphics[height=1.5in]{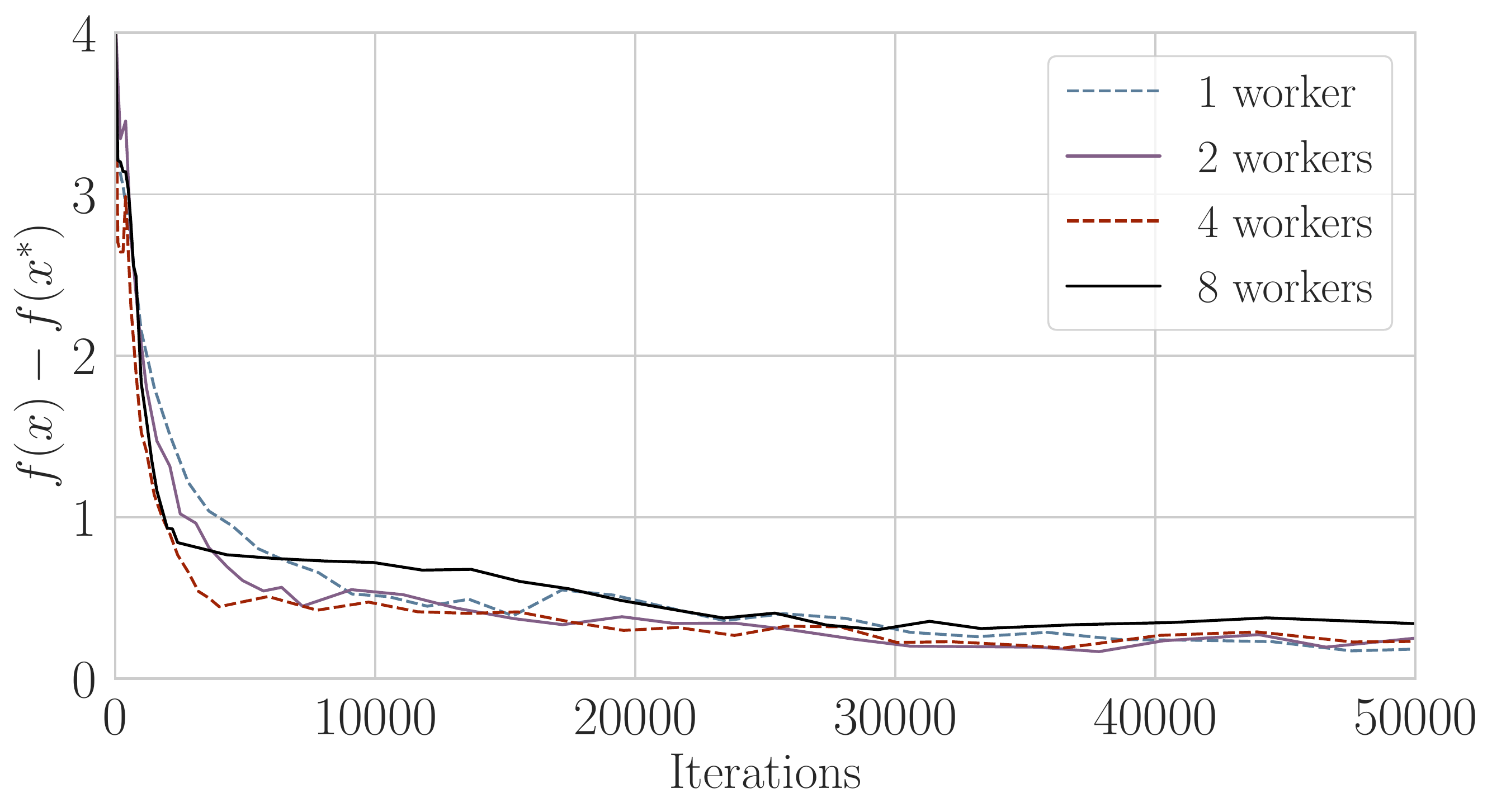}
    \label{fig:obj1}
  }
  \hspace{2mm}
  \subfigure[Time vs. Objective]{
    \includegraphics[height=1.5in]{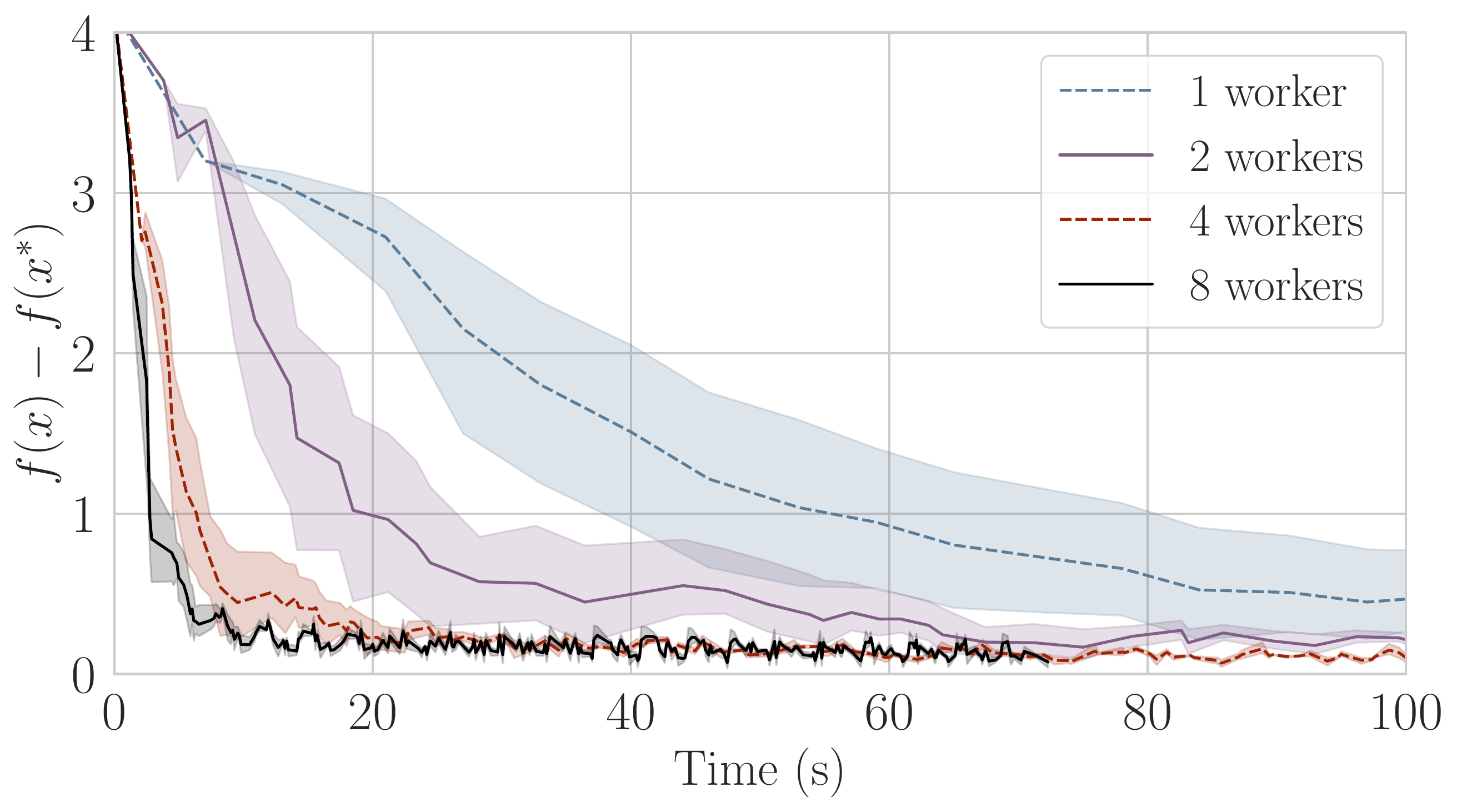}
    \label{fig:obj2}
  }
  \vspace{-3mm}
  \caption{Convergence of AsyB-ProxSGD on the sparse logistic regression problem under different numbers of workers. In this figure, the number of servers is fixed to 8.}
  \vspace{-2mm}
  \label{fig:simu_obj}
\end{figure*}
\vspace{-1mm}

We now present numerical results to confirm that our proposed algorithms can be used to solve the challenging non-convex non-smooth problems in machine learning. 

\textbf{Setup:} 
In our experiments, we consider the sparse logistic regression problem:
{
\begin{equation}
  \mathop{\min}_{x} \frac{1}{n}\sum_{i=1}^n \log(1 + \exp(-b_i \cdot a_i^\top x)) + \lambda_1\norm{x} + \frac{\lambda_2}{2}\norm{x}^2.
  \label{eq:sparse_lr}
\end{equation}
}
The $\ell_1$-regularized logistic regression is widely used for large scale risk minimization. We consider the \texttt{Avazu} dataset \footnote{Available at \url{http://www.csie.ntu.edu.tw/~cjlin/libsvmtools/datasets/}}, which is used in a click-through rate prediction competition jointly hosted by Avazu and Kaggle in 2014. In its training dataset (\texttt{avazu-app}), there are more than 14 million samples, 1 million features, and 4050 million nonzero entries. In other words, both $n$ and $d$ in \eqref{eq:sparse_lr} are large.

We use a cluster of 16 instances on Google Cloud. 
Each server or worker process uses just one core. 
Up to 8 instances serve as server nodes, while the other 8 instances serve as worker nodes. To show the advantage of \emph{asynchronous} parallelism, we set up four experiments adopting 1, 2, 4, and 8 worker nodes, respectively. For all experiments, the whole dataset is shuffled and all workers have a copy of this dataset. When computing a stochastic gradient, each worker takes one minibatch of random samples from its own copy. This way, each sample is used by a worker with an equal probability \emph{empirically} to mimic the scenario of our analysis. 

We consider the \emph{suboptimality gap} as our performance metric, which is defined as the gap between $f(x)$ and $f(x^*)$. Here we estimate the optimal value $\hat{x}$ by performing $5 \times$ as many iterations as needed for convergence. The hyper-parameters are set as follows. For all experiments, the coefficients are set as $\lambda_1=0.1$ and $\lambda_2=0.001$. We set the minibatch size to 8192. The step size is set according to $\eta_k := 0.1 / \sqrt{1.0+k}$ at iteration $k \geq 0$. 

\textbf{Implementation:}

\begin{figure*}[t]
  \centering
  \subfigure[Iteration vs. Objective]{
    \includegraphics[height=1.5in]{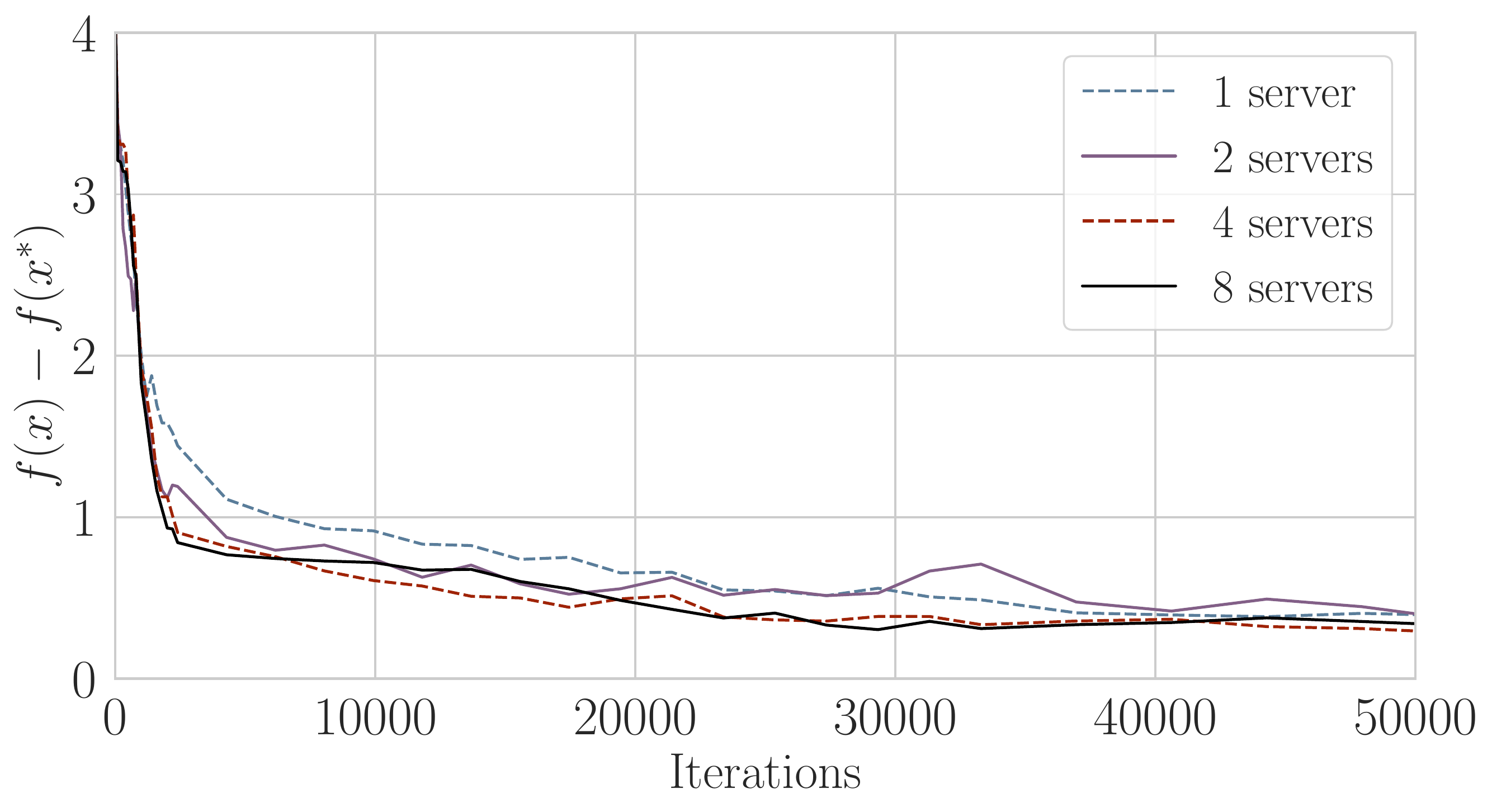}
    \label{fig:obj3}
  }
  \hspace{2mm}
  \subfigure[Time vs. Objective]{
    \includegraphics[height=1.5in]{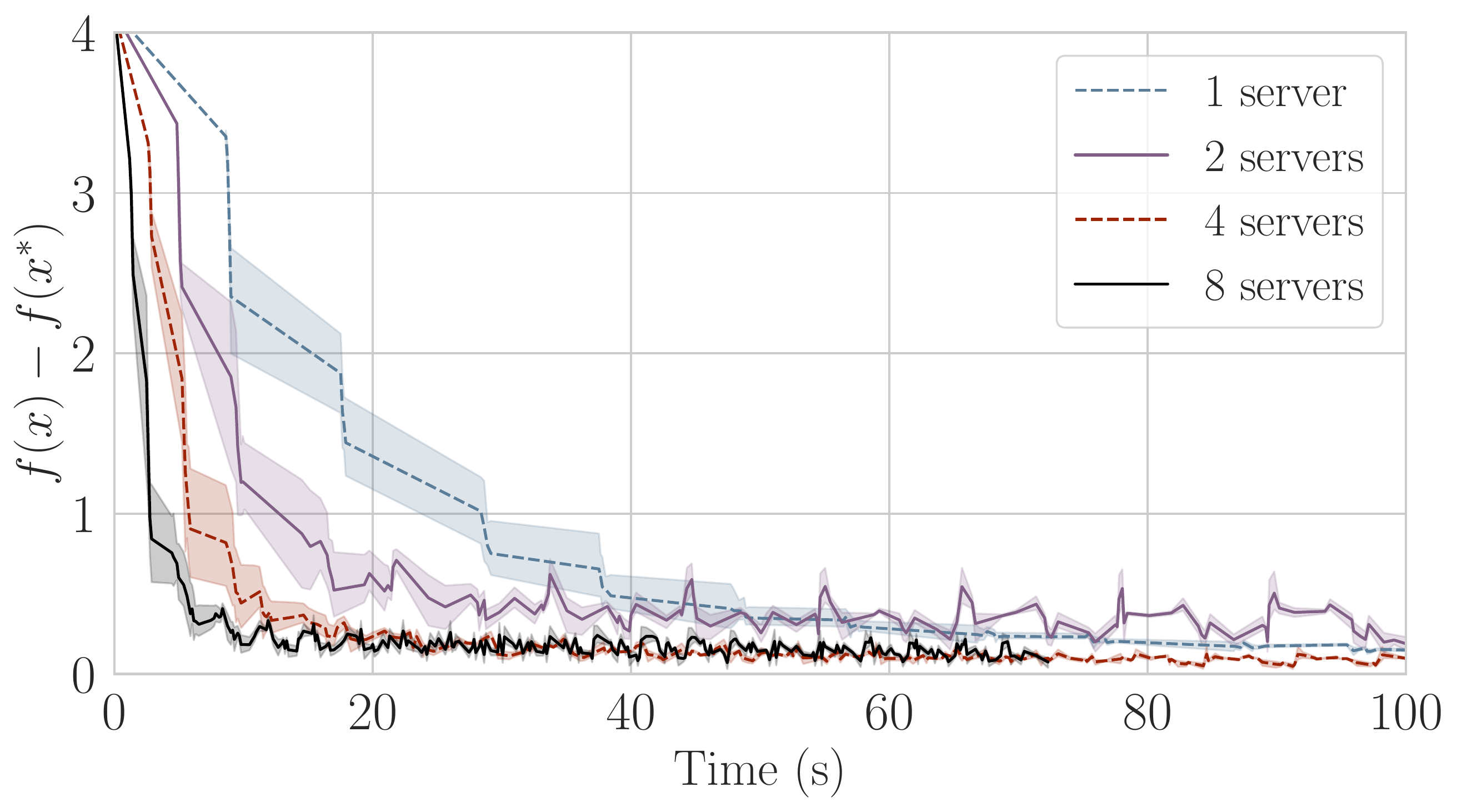}
    \label{fig:obj4}
  }
  \caption{Convergence of AsyB-ProxSGD on the sparse logistic regression problem under different numbers of servers. In this figure, we use 8 workers with different numbers of servers.}
  \label{fig:simu_obj_ps}
\end{figure*}
\vspace{-1mm}

We implemented our algorithm on MXNet \cite{chen2015mxnet}, a flexible and efficient deep learning library with support for distributed machine learning. Due to the sparse nature of the dataset, the model $x$ is stored as a sparse \texttt{ndarray}, and in each iteration, a worker only pulls those blocks of $x$ that are actively related to its sampled minibatch, and then calculates the gradient w.r.t. this minibatch of data, and pushes the gradients for those activated blocks only. 

\textbf{Results:} Empirically, Assumption~\ref{asmp:bound1} (bounded delays) are observed to hold for this cluster. In our experiments, the maximum delay does not exceed 100 iterations unless some worker nodes fail. Fig.~\ref{fig:obj1} and Fig.~\ref{fig:obj2} show the convergence behavior of AsyB-ProxSGD algorithm in terms of objective function values. We can clearly observe the convergence of our proposed algorithm, confirming that asynchrony with tolerable delays can still lead to convergence. In addition, the running time drops in trend when the number of workers increases.

For our proposed AsyB-ProxSGD algorithm, we are particularly interested in two kinds of speedup, namely, iteration speedup and running time speedup. If we need $T_1$ iterations (with $T_1$ sample gradients processed by servers) to achieve a certain suboptimality level using one worker, and $T_p$ iterations to achieve the same level using $p$ workers, then the iteration speedup is defined as $p \times T_1 / T_p$ \cite{lian2015asynchronous}. Note that iterations are counted on the server side, which is actually the number of minibatch gradients are processed by the server. On the other hand, the time speedup is simply defined as the ratio between the running time of using one worker and that of using $p$ workers to achieve the same suboptimality level. We summarize iteration and running time speedup in Table~\ref{table:time_vs_iter}.

\begin{table}[]
\centering
\caption{Iteration speedup and time speedup of AsyB-ProxSGD at the optimality level $10^{-1}$.}
\label{table:time_vs_iter}
\begin{tabular}{c|cccc}
\specialrule{.1em}{.05em}{.05em}
Workers           &  1       &  2       &  4       &  8       \\ \hline
Iteration Speedup & 1.000    & 2.127    & 3.689    & 6.748    \\
Time Speedup      & 1.000    & 1.973    & 4.103    & 8.937    \\
\specialrule{.1em}{.05em}{.05em}
\end{tabular}
\end{table}
\vspace{-3mm}

We further evaluate the relationship between the number of servers and the convergence behavior. Since the model has millions of parameters to be trained, storing the whole model in a single machine can be ineffective. In fact, from Fig.~\ref{fig:simu_obj_ps} we can even see nearly linear speedup w.r.t. the number of servers. The reason here is that, more servers can significantly decrease the length of request queue at the server side. When we have only one server, the blue dashed curve in Fig.~\ref{fig:obj4} looks like a tilt staircase, and further investigation shows that some \texttt{push} requests take too long time to be processed. Therefore, we have to set more than one servers to observe parallel speedup in Fig.~\ref{fig:simu_obj} so that servers are not the bottleneck.

\section{Related Work}
\label{sec:related}

\citet{robbins1951stochastic} propose a classical stochastic approximation algorithm for solving a class of strongly convex problems, which is regarded as the seminal work of stochastic optimization problems. For nonconvex problems, \citet{ghadimi2013stochastic} prove that SGD has an ergodic convergence rate of $O(1/\sqrt{K})$, which is consistent with the convergence rate of SGD for convex problems. To deal with nonsmoothness, proximal gradient algorithm is widely considered and its stochastic variant is heavily studied for convex problems. \citet{duchi2009efficient} show ProxSGD can converge at the rate of $O(1/\mu K)$ for $\mu$-strongly convex objective functions when the step size $\eta_k$ diminishes during iterations. However, for nonconvex problems, rather limited studies on ProxSGD exist so far. To our best knowledge, the seminal work on ProsSGD for nonconvex problems was done by \citet{ghadimi2016mini}, in which the convergence analysis is based on the assumption of an increasing minibatch size.

Updating a single block in each iteration is also referred to as block coordinate methods in the literature. Block coordinate methods for smooth problems with separable, convex constraints \cite{tseng1991rate} and general nonsmooth regularizers \cite{razaviyayn2014parallel,davis2016asynchronous,zhou2016convergence} are proposed. However, the study on \emph{stochastic} coordinate descent is limited and existing work like \cite{liu2015asynchronous} focuses on convex problems. \citet{xu2015block} study block stochastic proximal methods for nonconvex problems. However, they only analyze convergence to stationary points assuming an increasing minibatch size, and the convergence rate is not provided. \citet{davis2016sound} presents a stochastic block coordinate method, which is the closest one with our work in this paper. However, the algorithm studied in \cite{davis2016sound} depends on the use of a noise term with diminishing variance to guarantee convergence. Our convergence results of ProxSGD do not rely on the assumption of increasing batch sizes, variance reduction or the use of additional noise terms.

\section{Concluding Remarks}
\label{sec:conclude}

In this paper, we propose AsyB-ProxSGD as an extension of the proximal stochastic gradient (ProxSGD) algorithm to asynchronous model parallelism setting. Our proposed algorithm aims at solving nonconvex nonsmooth optimization problems involved with large data size and model dimension.  Theoretically, we prove that the AsyB-ProxSGD method has a convergence rate to critical points with the same order as ProxSGD, as long as workers have bounded delay during iterations. Our convergence result does not rely on minibatch size increasing, which is required in all existing works for ProxSGD (and its variants). We further prove that AsyB-ProxSGD can achieve linear speedup when the number of workers is bounded by $O(K^{1/4})$. We implement AsyB-ProxSGD on Parameter Server and experiments on large scale real-world dataset confirms its effectiveness.

\bibliographystyle{abbrvnat}
\bibliography{src/main}

\newpage
\onecolumn
\appendix



\section{Auxiliary Lemmas}
\begin{lemma}[\cite{ghadimi2016mini}]
    For all $y \gets \prox{\eta h}(x - \eta g)$, we have:
    \begin{equation}
        \innerprod{g, y-x} + (h(y) - h(x)) \leq -\frac{\norm{y-x}_2^2}{\eta}.
    \end{equation}
    \label{lem:lem-1}
\end{lemma}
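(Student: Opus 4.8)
The plan is to read off the first-order optimality condition for the proximal subproblem and then feed it into the subgradient inequality. By definition, $y = \prox{\eta h}(x - \eta g) = \arg\min_{u \in \real{d}} \left\{ h(u) + \frac{1}{2\eta}\norm{u - (x - \eta g)}^2 \right\}$. The objective here is the sum of the proper closed convex function $h$ and a smooth strongly convex quadratic, so it has a unique minimizer, and the minimizer $y$ is characterized by $0 \in \partial h(y) + \frac{1}{\eta}\big(y - (x - \eta g)\big)$. Rearranging, this says that the vector $p := -g - \frac{1}{\eta}(y - x)$ belongs to $\partial h(y)$.

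Next I would invoke the definition of the subdifferential (the first displayed inequality in the Subdifferential definition): since $p \in \partial h(y)$, for every $z \in \mathrm{dom}\, h$ we have $h(z) \geq h(y) + \innerprod{p, z - y}$. Choosing $z = x$ (which lies in $\mathrm{dom}\, h$, as otherwise $h(x) = +\infty$ and the claimed inequality is trivial) gives
\[
h(x) \geq h(y) + \Innerprod{-g - \tfrac{1}{\eta}(y-x),\ x - y} = h(y) + \innerprod{g, y-x} + \tfrac{1}{\eta}\norm{y-x}^2 .
\]
Moving $h(y)$ to the left and the inner-product term to the left as well yields exactly $\innerprod{g, y-x} + \big(h(y) - h(x)\big) \leq -\frac{1}{\eta}\norm{y-x}^2$, which is the claim.

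There is no real obstacle here; the only point requiring a little care is the legitimacy of the first-order optimality condition $0 \in \partial h(y) + \frac{1}{\eta}(y - x + \eta g)$. This follows from the sum rule for subdifferentials, valid because one summand (the quadratic) is differentiable everywhere, so no constraint-qualification subtlety arises; the quadratic's gradient at $y$ is $\frac{1}{\eta}(y - (x-\eta g))$. If one prefers to avoid subdifferential calculus entirely, an alternative is to use strong convexity of the proximal objective directly: comparing the value at $y$ with the value at $x$ and using the $\frac{1}{\eta}$-strong convexity of $u \mapsto h(u) + \frac{1}{2\eta}\norm{u-(x-\eta g)}^2$ gives the same inequality after expanding the squared norms. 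Either route is a couple of lines, so I would present the subgradient argument as the cleanest.
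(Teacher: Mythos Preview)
Your proof is correct and is essentially the same as the paper's: both extract from the optimality condition of the proximal subproblem a subgradient $p = -g - \tfrac{1}{\eta}(y-x) \in \partial h(y)$ and then apply the subgradient inequality at $y$ with the test point $x$. The paper phrases the optimality condition as a variational inequality $\innerprod{g + \tfrac{y-x}{\eta} + p,\, x-y} \ge 0$ before rearranging, but the substance is identical.
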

Due to slightly different notations and definitions in \cite{ghadimi2016mini}, we provide a proof here for completeness. We refer readers to \cite{ghadimi2016mini} for more details.
\begin{proof}
    By the definition of proximal function, there exists a $p \in \partial h(y)$ such that:
    \begin{equation*}
        \begin{split}
            \innerprod{g + \frac{y-x}{\eta} + p, x - y} &\geq 0, \\
            \innerprod{g, x-y} &\geq \frac{1}{\eta} \innerprod{y-x, y-x} + \innerprod{p, y-x} \\
            \innerprod{g, x-y} + (h(x) - h(y)) &\geq \frac{1}{\eta} \norm{y-x}_2^2,
        \end{split}
    \end{equation*}
    which proves the lemma.
\end{proof}
By applying the above lemma for each block, we have the following corollary, which is useful in convergence analysis for Algorithm~\ref{alg:abpsgd-global}.
\begin{corollary}
    For all $y_j \gets \prox{\eta h_j}(x_j - \eta g_j)$, we have:
    \begin{equation}
        \innerprod{g_j, y_j-x_j} + (h_j(y_j) - h_j(x_j)) \leq -\frac{\norm{y_j-x_j}_2^2}{\eta}.
    \end{equation}
    \label{cor:lem-1}
\end{corollary}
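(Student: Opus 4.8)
The plan is to observe that Corollary~\ref{cor:lem-1} is nothing more than Lemma~\ref{lem:lem-1} instantiated on a single block, so no new argument is needed---only a verification that the hypotheses of Lemma~\ref{lem:lem-1} transfer. Concretely, Lemma~\ref{lem:lem-1} is stated for an arbitrary proper and closed function on a Euclidean space; by the standing assumption each $h_j$ is proper, closed and convex on $\real{d_j}$, so Lemma~\ref{lem:lem-1} applies verbatim with the substitution $h \mapsto h_j$, $x \mapsto x_j$, $g \mapsto g_j$, $y \mapsto y_j$, and ambient space $\real{d} \mapsto \real{d_j}$. This immediately yields
\[
\innerprod{g_j, y_j - x_j} + \big(h_j(y_j) - h_j(x_j)\big) \leq -\frac{\norm{y_j - x_j}_2^2}{\eta},
\]
which is the claim.

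If one prefers to derive it from the full-space statement rather than re-instantiating the lemma, I would use the separability of the proximal operator: since $h(x) = \sum_{j=1}^M h_j(x_j)$, we have $\prox{\eta h}(x) = (\prox{\eta h_1}(x_1), \ldots, \prox{\eta h_M}(x_M))$. Pick $g = (0, \ldots, 0, g_j, 0, \ldots, 0)$ supported only on block $j$ and $x$ arbitrary with the chosen block equal to $x_j$; then $y := \prox{\eta h}(x - \eta g)$ agrees with $x$ off block $j$ and equals $\prox{\eta h_j}(x_j - \eta g_j) = y_j$ on block $j$. Plugging this $y$ into Lemma~\ref{lem:lem-1}, the inner product $\innerprod{g, y - x}$ collapses to $\innerprod{g_j, y_j - x_j}$, the difference $h(y) - h(x)$ collapses to $h_j(y_j) - h_j(x_j)$ since all other blocks are unchanged, and $\norm{y - x}_2^2 = \norm{y_j - x_j}_2^2$, giving exactly the stated inequality.

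There is essentially no obstacle here: the only point worth stating explicitly is that restricting a proper/closed/convex separable function to one block again yields a proper/closed/convex function, which is immediate. I would therefore keep the proof to one or two lines, citing Lemma~\ref{lem:lem-1} and noting the blockwise application, and leave the separability remark as an optional alternative justification.
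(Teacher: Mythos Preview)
Your proposal is correct and matches the paper's own treatment: the paper does not provide a separate proof but simply states that the corollary follows by applying Lemma~\ref{lem:lem-1} to each block, which is exactly your primary argument. The separability alternative you sketch is a valid elaboration but is not needed.
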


\begin{lemma}[\cite{ghadimi2016mini}]
    For all $x,g,G \in \real{d}$, if $h: \real{d} \to \real{}$ is a convex function, we have
    \begin{equation}
        \norm{\prox{\eta h}(x - \eta G) - \prox{\eta h}(x - \eta g)} \leq \eta \norm{G-g}.
    \end{equation}
    \label{lem:lem-2}
\end{lemma}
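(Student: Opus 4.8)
The plan is to recognize this as the standard nonexpansiveness (that is, $1$-Lipschitz) property of the proximal operator of a convex function, and then apply it to the two points $x - \eta G$ and $x - \eta g$, whose Euclidean distance is exactly $\eta \norm{G - g}$. Thus it suffices to prove that for any $u, v \in \real{d}$ one has $\norm{\prox{\eta h}(u) - \prox{\eta h}(v)} \leq \norm{u - v}$.

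First I would write down the first-order optimality conditions for the two proximal points, in the same style used in the proof of Lemma~\ref{lem:lem-1}. Set $y_1 := \prox{\eta h}(u)$ and $y_2 := \prox{\eta h}(v)$. Since $h$ is proper, closed and convex, the objective $h(\cdot) + \frac{1}{2\eta}\norm{\cdot - u}^2$ is strongly convex, so $y_1$ (and likewise $y_2$) is well defined, and the optimality condition reads $0 \in \partial h(y_1) + \frac{1}{\eta}(y_1 - u)$; that is, there is $p_1 \in \partial h(y_1)$ with $p_1 + \frac{1}{\eta}(y_1 - u) = 0$, and similarly $p_2 \in \partial h(y_2)$ with $p_2 + \frac{1}{\eta}(y_2 - v) = 0$. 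Subtracting these and taking the inner product with $y_1 - y_2$ gives
\[ \frac{1}{\eta}\norm{y_1 - y_2}^2 + \innerprod{p_1 - p_2,\, y_1 - y_2} = \frac{1}{\eta}\innerprod{u - v,\, y_1 - y_2}. \]

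Next I would invoke monotonicity of the subdifferential of the convex function $h$: applying the subgradient inequality at $y_1$ evaluated at $y_2$, and at $y_2$ evaluated at $y_1$, and adding the two inequalities yields $\innerprod{p_1 - p_2,\, y_1 - y_2} \geq 0$. Combined with the identity above and the Cauchy--Schwarz inequality, this gives $\frac{1}{\eta}\norm{y_1 - y_2}^2 \leq \frac{1}{\eta}\innerprod{u - v,\, y_1 - y_2} \leq \frac{1}{\eta}\norm{u - v}\,\norm{y_1 - y_2}$, hence $\norm{y_1 - y_2} \leq \norm{u - v}$. Substituting $u = x - \eta G$ and $v = x - \eta g$, so that $\norm{u - v} = \eta\norm{G - g}$, completes the argument.

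I do not expect a real obstacle: the proof is only a few lines. The only points needing a little care are the well-definedness of the proximal point (handled by strong convexity of the quadratically-regularized objective) and writing the optimality condition as the subdifferential inclusion, which is exactly the characterization already used implicitly in the proof of Lemma~\ref{lem:lem-1}, so it is consistent with the paper's conventions. The remaining ingredients --- monotonicity of $\partial h$ and Cauchy--Schwarz --- are entirely elementary.
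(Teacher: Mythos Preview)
Your proposal is correct and follows essentially the same approach as the paper: both arguments write the first-order optimality conditions for the two proximal points, combine them using monotonicity of $\partial h$ (you state it directly; the paper invokes the subgradient inequality and cancels the $h$-terms), and finish with Cauchy--Schwarz. The only cosmetic difference is that you first abstract to general inputs $u,v$ and then specialize to $u=x-\eta G$, $v=x-\eta g$, whereas the paper works with these specific points throughout.
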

\begin{proof}
    Let $y$ denote $\prox{\eta h}(x - \eta G)$ and $z$ denote $\prox{\eta h}(x - \eta g)$. By definition of the proximal operator, for all $u \in \real{d}$, we have
    \begin{align*}
        \innerprod{G + \frac{y-x}{\eta} + p, u-y} &\geq 0, \\
        \innerprod{g + \frac{z-x}{\eta} + q, u-z} &\geq 0,
    \end{align*}
    where $p \in \partial h(y)$ and $q \in \partial h(z)$. Let $z$ substitute $u$ in the first inequality and $y$ in the second one, we have
    \begin{align*}
        \innerprod{G + \frac{y-x}{\eta} + p, z-y} &\geq 0, \\
        \innerprod{g + \frac{z-x}{\eta} + q, y-z} &\geq 0.
    \end{align*}
    Then, we have
    \begin{align}
        \innerprod{G, z-y} &\geq \innerprod{\frac{y-x}{\eta}, y-z} + \innerprod{p, y-z}, \\
        &= \frac{1}{\eta}\innerprod{y-z, y-z} + \frac{1}{\eta}\innerprod{z-x, y-z} + \innerprod{p, y-z}, \\
        &\geq \frac{\norm{y-z}^2}{\eta} + \frac{1}{\eta}\innerprod{z-x, y-z} +h(y) - h(z),
         \label{eq:lem-2-1}
    \end{align}
    and
    \begin{align}
        \innerprod{g, y-z} &\geq \innerprod{\frac{z-x}{\eta} + q, z-y}, \\
        &= \frac{1}{\eta} \innerprod{z-x, z-y} + \innerprod{q, z-y} \\
        &\geq \frac{1}{\eta} \innerprod{z-x, z-y} + h(z) - h(y). \label{eq:lem-2-2}
    \end{align}
    By adding \eqref{eq:lem-2-1} and \eqref{eq:lem-2-2}, we obtain
    \begin{align*}
        \norm{G-g} \norm{z-y} \geq \innerprod{G-g, z-y} \geq \frac{1}{\eta} \norm{y-z}^2,
    \end{align*}
    which proves the lemma.
\end{proof}
\begin{corollary}
    For all $x_j,g_j,G_j \in \real{d_j}$, we have
    \begin{equation}
        \norm{\prox{\eta h_j}(x_j - \eta G_j) - \prox{\eta h_j}(x_j - \eta g_j)} \leq \eta \norm{G_j - g_j}.
    \end{equation}
    \label{cor:lem-2}
\end{corollary}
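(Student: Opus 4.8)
The plan is to obtain this as an immediate instantiation of Lemma~\ref{lem:lem-2}. Recall that by our standing assumptions on the separable regularizer $h = \sum_{j=1}^M h_j$, each $h_j$ is a proper, closed, and convex function on $\real{d_j}$. Since the only structural hypothesis Lemma~\ref{lem:lem-2} places on the regularizer is convexity, it applies verbatim with the ambient space $\real{d}$ replaced by the block space $\real{d_j}$: take the convex function to be $h_j$, the base point to be $x_j$, and the two gradient surrogates to be $G_j$ and $g_j$. This directly yields the asserted inequality.

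If one prefers to spell the argument out rather than cite Lemma~\ref{lem:lem-2} as a black box, I would set $y := \prox{\eta h_j}(x_j - \eta G_j)$ and $z := \prox{\eta h_j}(x_j - \eta g_j)$ and rerun the optimality-condition argument on $\real{d_j}$. By the definition of the proximal operator there exist $p \in \partial h_j(y)$ and $q \in \partial h_j(z)$ with $\innerprod{G_j + (y - x_j)/\eta + p, u - y} \geq 0$ and $\innerprod{g_j + (z - x_j)/\eta + q, u - z} \geq 0$ for all $u \in \real{d_j}$; substituting $u = z$ into the first inequality and $u = y$ into the second, adding them, and using the subgradient inequalities $\innerprod{p, y-z} \geq h_j(y) - h_j(z)$ and $\innerprod{q, z-y} \geq h_j(z) - h_j(y)$ gives $\norm{G_j - g_j}\,\norm{y - z} \geq \innerprod{G_j - g_j, y - z} \geq \norm{y - z}^2 / \eta$. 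Dividing by $\norm{y - z}$ yields the claim, the bound being trivial when $y = z$.

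There is essentially no obstacle here: the corollary is nothing more than Lemma~\ref{lem:lem-2} read one block at a time, and the only hypothesis to verify — that $h_j$ inherits convexity — is part of the assumptions already in force. The single mild point of care is the same degenerate case that appears in the proof of Lemma~\ref{lem:lem-2}, namely the division by $\norm{y - z}$, which is harmless because the inequality holds with equality when $y = z$.
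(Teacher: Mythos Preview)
Your proposal is correct and matches the paper's approach exactly: the paper states this corollary immediately after Lemma~\ref{lem:lem-2} with no separate proof, treating it as the obvious blockwise instantiation, which is precisely what you do.
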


\begin{lemma}[\cite{ghadimi2016mini}]
    For any $g_1$ and $g_2$, we have
    \begin{equation}
        \norm{P(x, g_1, \eta) - P(x, g_2, \eta)} \leq \norm{g_1 - g_2}.
    \end{equation}
    \label{lem:lem-3}
\end{lemma}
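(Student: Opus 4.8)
The plan is to unwind the definition of the gradient mapping and then invoke the proximal stability bound already established in Lemma~\ref{lem:lem-2}. Recall that $P(x,g,\eta) = \frac{1}{\eta}\bigl(x - \prox{\eta h}(x-\eta g)\bigr)$. Writing this out for $g_1$ and $g_2$ and subtracting, the $x/\eta$ terms cancel, so that
\begin{equation*}
P(x,g_1,\eta) - P(x,g_2,\eta) = \frac{1}{\eta}\Bigl(\prox{\eta h}(x-\eta g_2) - \prox{\eta h}(x-\eta g_1)\Bigr).
\end{equation*}

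Next I would take norms on both sides and apply Lemma~\ref{lem:lem-2} with the identifications $G \gets g_1$ and $g \gets g_2$ (the roles are symmetric, so the sign flip inside the parentheses is harmless), which gives $\norm{\prox{\eta h}(x-\eta g_1) - \prox{\eta h}(x-\eta g_2)} \leq \eta\norm{g_1 - g_2}$. Substituting this into the displayed identity yields
\begin{equation*}
\norm{P(x,g_1,\eta) - P(x,g_2,\eta)} = \frac{1}{\eta}\norm{\prox{\eta h}(x-\eta g_2) - \prox{\eta h}(x-\eta g_1)} \leq \frac{1}{\eta}\cdot \eta\norm{g_1-g_2} = \norm{g_1-g_2},
\end{equation*}
which is the claimed inequality.

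Honestly, there is no real obstacle here: the lemma is a one-line corollary of the nonexpansiveness of the proximal operator (Lemma~\ref{lem:lem-2}) together with the definition of $P$. The only thing to be careful about is bookkeeping of the $1/\eta$ factor and making sure the argument does not secretly use convexity of $h$ beyond what Lemma~\ref{lem:lem-2} already assumes (it assumes $h$ convex, which is consistent with the standing assumptions on $h$). If one wanted a blockwise version for use with Algorithm~\ref{alg:apscd-global}, the identical argument applied coordinatewise via Corollary~\ref{cor:lem-2} would give $\norm{P_j(x, g_1, \eta) - P_j(x, g_2, \eta)} \leq \norm{(g_1)_j - (g_2)_j}$, but that is not needed for the statement as written.
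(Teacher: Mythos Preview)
Your proposal is correct and matches the paper's own proof exactly: the paper simply states that the lemma follows by directly applying Lemma~\ref{lem:lem-2} together with the definition of the gradient mapping, which is precisely the computation you wrote out. Your version is just a more explicit rendering of that one-line argument.
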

\begin{proof}
    It can be obtained by directly applying Lemma~\ref{lem:lem-2} and the definition of gradient mapping.
\end{proof}

\begin{corollary}
    Let $P_j(x, g, \eta) := \frac{1}{\eta}(x_j - \prox{\eta h_j}(x_j - \eta g_j))$. Then, for any $G_j$ and $g_j$, we have
    \begin{equation}
        \norm{P_j(x, G, \eta) - P_j(x, g, \eta)} \leq \norm{G- g}.
    \end{equation}
    \label{cor:lem-3}
\end{corollary}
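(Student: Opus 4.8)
The plan is to reduce the claim to Corollary~\ref{cor:lem-2}, mirroring exactly the way Lemma~\ref{lem:lem-3} is obtained from Lemma~\ref{lem:lem-2}. First I would unfold the definition of $P_j$ for the two gradient arguments and subtract; the $x_j$ terms and the common factor $1/\eta$ cancel, leaving
\[
	P_j(x,G,\eta) - P_j(x,g,\eta) = \frac{1}{\eta}\left(\prox{\eta h_j}(x_j-\eta g_j) - \prox{\eta h_j}(x_j-\eta G_j)\right).
\]
Taking $\ell_2$ norms and pulling the positive scalar $1/\eta$ outside gives
\[
	\norm{P_j(x,G,\eta) - P_j(x,g,\eta)} = \frac{1}{\eta}\norm{\prox{\eta h_j}(x_j-\eta G_j) - \prox{\eta h_j}(x_j-\eta g_j)},
\]
where I have used the symmetry of the norm to reorder the two proximal terms.

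Next I would invoke Corollary~\ref{cor:lem-2}, the block version of the nonexpansiveness of the proximal map (valid because each $h_j$ is proper, closed and convex), which bounds the right-hand side by $\frac{1}{\eta}\cdot\eta\,\norm{G_j-g_j} = \norm{G_j-g_j}$. Since $G_j-g_j$ is simply the restriction of $G-g$ to block $j$, we have $\norm{G_j-g_j}\le\norm{G-g}$; combining the two displays yields $\norm{P_j(x,G,\eta) - P_j(x,g,\eta)} \le \norm{G-g}$, as claimed.

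The argument is pure bookkeeping, so I do not anticipate a genuine obstacle. The only points worth a moment's care are the symmetry step used to align the proximal difference with the statement of Corollary~\ref{cor:lem-2}, and the final passage from the block norm $\norm{G_j-g_j}$ to the ambient norm $\norm{G-g}$ — that last inequality is what makes the corollary, as stated with $G,g$ on the right-hand side, slightly more than a verbatim blockwise transcription of Lemma~\ref{lem:lem-3}.
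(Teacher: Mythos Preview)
Your proposal is correct and mirrors the paper's own treatment: the paper states Corollary~\ref{cor:lem-3} without a separate proof, treating it as the blockwise analogue obtained by applying Corollary~\ref{cor:lem-2} together with the definition of $P_j$, exactly as you outline. Your additional observation that $\norm{G_j-g_j}\le\norm{G-g}$ is the only extra ingredient needed to match the right-hand side as stated, and it is immediate.
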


\begin{lemma}[\cite{reddi2016proximal}]
    Suppose we define $y = \prox{\eta h} (x - \eta g)$ for some $g$. Then for $y$, the following inequality holds:
    \begin{equation}
    \begin{split}
        \Psi(y) \leq \Psi(z) + &\innerprod{y-z, \nabla f(x) - g} \\
        &+ \left( \frac{L}{2} - \frac{1}{2\eta} \right) \norm{y-x}^2
         + \left( \frac{L}{2} + \frac{1}{2\eta} \right) \norm{z-x}^2
         - \frac{1}{2\eta} \norm{y-z}^2,
    \end{split}
    \end{equation}
    for all $z$.
    \label{lem:grad_diff}
\end{lemma}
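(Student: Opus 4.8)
The plan is to combine three ingredients — the $L$-smoothness of $f$ (Assumption~\ref{asmp:smooth}), the first-order optimality condition defining the proximal step, and the convexity of $h$ — and then rearrange with the polarization identity. Crucially, no convexity of $f$ is used; instead the descent lemma is applied \emph{twice}, both times anchored at $x$, so that only $\nabla f(x)$ (and not $\nabla f(z)$) appears in the final bound.

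First I would write the two smoothness inequalities about $x$. Since $\nabla f$ is $L$-Lipschitz, on one hand $f(y) \leq f(x) + \innerprod{\nabla f(x), y-x} + \frac{L}{2}\norm{y-x}^2$, and on the other hand $f(z) \geq f(x) + \innerprod{\nabla f(x), z-x} - \frac{L}{2}\norm{z-x}^2$, i.e. $f(x) \leq f(z) + \innerprod{\nabla f(x), x-z} + \frac{L}{2}\norm{z-x}^2$. Adding these gives $f(y) \leq f(z) + \innerprod{\nabla f(x), y-z} + \frac{L}{2}\norm{y-x}^2 + \frac{L}{2}\norm{z-x}^2$. Next, for the nonsmooth term: since $y = \prox{\eta h}(x-\eta g) = \arg\min_u\{h(u) + \frac{1}{2\eta}\norm{u-(x-\eta g)}^2\}$, there is a $p \in \partial h(y)$ with $g + \frac{y-x}{\eta} + p = 0$, so convexity of $h$ yields $h(y) \leq h(z) + \innerprod{p, y-z} = h(z) - \innerprod{g + \frac{y-x}{\eta},\, y-z}$. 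Summing the $f$- and $h$-bounds, $\Psi(y) \leq \Psi(z) + \innerprod{\nabla f(x) - g, y-z} - \frac{1}{\eta}\innerprod{y-x, y-z} + \frac{L}{2}\norm{y-x}^2 + \frac{L}{2}\norm{z-x}^2$.

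Finally I would rewrite the cross term using $\innerprod{y-x, y-z} = \frac12\norm{y-x}^2 - \frac12\norm{z-x}^2 + \frac12\norm{y-z}^2$, which converts $-\frac{1}{\eta}\innerprod{y-x,y-z}$ into $-\frac{1}{2\eta}\norm{y-x}^2 + \frac{1}{2\eta}\norm{z-x}^2 - \frac{1}{2\eta}\norm{y-z}^2$; collecting the coefficients of $\norm{y-x}^2$, $\norm{z-x}^2$, and $\norm{y-z}^2$ then reproduces the claimed inequality exactly. There is essentially no hard step: the whole argument is a handful of lines, and the only thing to be careful about is the sign of each term in the polarization identity and keeping both smoothness inequalities anchored at the same point $x$.
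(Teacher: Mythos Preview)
Your proof is correct and matches the approach the paper uses (see the proof of Corollary~\ref{cor:grad_diff}, the block version of this lemma, since Lemma~\ref{lem:grad_diff} itself is merely cited). The only cosmetic difference is that the paper obtains the $h$-inequality by invoking the $1/\eta$-strong convexity of the prox objective directly---yielding $h(y)+\innerprod{g,y-z}\le h(z)+\tfrac{1}{2\eta}\bigl(\norm{z-x}^2-\norm{y-x}^2-\norm{y-z}^2\bigr)$ in one step---whereas you reach the same three-point estimate via the first-order optimality condition plus the polarization identity; the smoothness part (two descent inequalities anchored at $x$) is identical.
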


\begin{corollary}
    Suppose we define $y_j = \prox{\eta h_j} (x_j - \eta g_j)$ for some $g_j$, and the index $j$ is chosen among $M$ indices with uniform distribution. For other $j' \neq j$, we assume $y_{j'} = x_{j'}$. Then the following inequality holds:
    \begin{equation}
    \begin{split}
        \Psi(y) &\leq \Psi(z) + \innerprod{\nabla_j f(x) - g_j, y_j - z_j} \\
        &\quad\quad + \left( \frac{L_j}{2} - \frac{1}{2\eta}\right)\norm{y_j-x_j}^2 + \left( \frac{L_j}{2} + \frac{1}{2\eta}\right)\norm{z_j-x_j}^2 - \frac{1}{2\eta}\norm{y_j-x_j}^2.
    \end{split}
    \end{equation}
    for all $z$.
    \label{cor:grad_diff}
\end{corollary}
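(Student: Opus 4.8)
The plan is to replay the proof of Lemma~\ref{lem:grad_diff} verbatim, but with every object restricted to the single active block $j$, using the separability $h=\sum_{j'} h_{j'}$ so that the non-smooth part contributes only through $h_j$. Since both $y$ and the comparison point $z$ are taken to agree with $x$ in every block $j'\neq j$ (the regime in which the corollary is used in Section~\ref{sec:theory}), the other summands of $h$ cancel and $\Psi(y)-\Psi(z)=\big(f(y)-f(z)\big)+\big(h_j(y_j)-h_j(z_j)\big)$; it therefore suffices to bound these two differences separately.

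For the smooth part I would first record the block-restricted descent inequality obtained by integrating the one-dimensional Lipschitz bound of Assumption~\ref{asmp:smooth} along direction $e_j$: for any $u$ coinciding with $x$ outside block $j$, $\big|f(u)-f(x)-\innerprod{\nabla_j f(x),\,u_j-x_j}\big|\le \tfrac{L_j}{2}\norm{u_j-x_j}^2$. Using the upper side with $u=y$ and the lower side with $u=z$ and adding gives
\[
f(y)\le f(z)+\innerprod{\nabla_j f(x),\,y_j-z_j}+\tfrac{L_j}{2}\norm{y_j-x_j}^2+\tfrac{L_j}{2}\norm{z_j-x_j}^2 .
\]
For the non-smooth part I would invoke the first-order optimality condition underlying Corollary~\ref{cor:lem-1}: there is $p\in\partial h_j(y_j)$ with $g_j+\tfrac1\eta(y_j-x_j)+p=0$. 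Pairing this with $z_j-y_j$, using $\innerprod{p,\,z_j-y_j}\le h_j(z_j)-h_j(y_j)$ and the polarization identity $\innerprod{y_j-x_j,\,y_j-z_j}=\tfrac12\big(\norm{y_j-x_j}^2+\norm{y_j-z_j}^2-\norm{z_j-x_j}^2\big)$ yields $h_j(y_j)-h_j(z_j)\le -\innerprod{g_j,\,y_j-z_j}-\tfrac1{2\eta}\big(\norm{y_j-x_j}^2+\norm{y_j-z_j}^2-\norm{z_j-x_j}^2\big)$. Adding the two bounds and collecting terms gives the stated inequality, with coefficient $\tfrac{L_j}{2}-\tfrac1{2\eta}$ on $\norm{y_j-x_j}^2$, coefficient $\tfrac{L_j}{2}+\tfrac1{2\eta}$ on $\norm{z_j-x_j}^2$, and the leftover $-\tfrac1{2\eta}\norm{y_j-z_j}^2$.

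I do not expect any genuinely hard step: the argument is pure bookkeeping, a blockwise transcription of Lemma~\ref{lem:grad_diff}. The only points needing care are (i) using the separability of $h$ consistently, i.e.\ restricting attention to $z$ that agrees with $x$ off block $j$, since otherwise the $\sum_{j'\neq j}h_{j'}$ terms do not cancel and one cannot pass from $f(x)$ to $f(z)$ via the one-block smoothness inequality; (ii) noting that the uniform choice of $j$ in the hypothesis is irrelevant to this deterministic inequality and is only used when the bound is later averaged over $j$; and (iii) checking the constant, since the second inequality of Assumption~\ref{asmp:smooth} is stated with $L_{\max}$, so one actually obtains $L_{\max}$ (or any valid per-block constant $L_j\le L_{\max}$) in the smoothness terms. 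I would also flag that the last term in the statement is most naturally $-\tfrac1{2\eta}\norm{y_j-z_j}^2$ rather than $-\tfrac1{2\eta}\norm{y_j-x_j}^2$.
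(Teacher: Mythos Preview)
Your proposal is correct and follows essentially the same route as the paper's own proof: both split $\Psi(y)-\Psi(z)$ into a smooth part handled by two block-wise descent inequalities (one at $y$, one at $z$) and a non-smooth part coming from the optimality of $y_j$ in the prox subproblem. The only cosmetic difference is that the paper extracts the $h_j$ inequality from the $\tfrac1\eta$-strong convexity of the prox objective, whereas you obtain it from the first-order optimality condition $0\in g_j+\tfrac1\eta(y_j-x_j)+\partial h_j(y_j)$ together with the polarization identity; these are equivalent derivations of the same three-point inequality. Your caveats are also well taken and match the paper's implicit assumptions: the proof (both yours and the paper's) requires $z$ to coincide with $x$ outside block $j$, the uniform randomness of $j$ is irrelevant to this deterministic bound, and the final term should indeed read $-\tfrac1{2\eta}\norm{y_j-z_j}^2$, which is how it is actually used in the proof of Lemma~\ref{lem:desc_2}.
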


\begin{proof}
    From the definition of proximal operator, we have
    \begin{align*}
        &\ \quad h_j(y_j) + \innerprod{g_j, y_j-x_j} + \frac{1}{2\eta} \norm{y_j - x_j}^2 + \frac{\eta}{2}\norm{g_j}^2 \\
        &\leq h_j(z_j) + \innerprod{g_j, z_j-x_j} + \frac{1}{2\eta} \norm{z_j - x_j}^2 + \frac{\eta}{2}\norm{g_j}^2 - \frac{1}{2\eta}\norm{y_j - z_j}^2.
    \end{align*}
    By rearranging the above inequality, we have
    \begin{equation}
        h_j(y_j) + \innerprod{g_j, y_j-z_j}  \leq h_j(z_j) + \frac{1}{2\eta}[\norm{z_j - x_j}^2 - \norm{y_j - x_j}^2  - \norm{y_j - z_j}^2].
    \end{equation}
    Since $f$ is $L$-Lipschitz, we have
    \begin{align*}
        f(y) &\leq f(x) + \innerprod{\nabla_j f(x), y_j-x_j} + \frac{L_j}{2}\norm{y_j-x_j}^2, \\
        f(x) &\leq f(z) + \innerprod{\nabla_j f(x), x_j-z_j} + \frac{L_j}{2}\norm{x_j-z_j}^2.
    \end{align*}
    Adding these two inequality we have
    \begin{equation}
        f(y) \leq f(z)  + \innerprod{\nabla_j f(x), y_j-z_j} + \frac{L}{2}[\norm{y_j-x_j}^2 + \norm{z_j-x_j}^2],
    \end{equation}
    and therefore
    \begin{align*}
        \Psi(y) &\leq \Psi(z) + \innerprod{\nabla_j f(x) - g_j, y_j - z_j} \\
        &\quad\quad + \left( \frac{L_j}{2} - \frac{1}{2\eta}\right)\norm{y_j-x_j}^2 + \left( \frac{L_j}{2} + \frac{1}{2\eta}\right)\norm{z_j-x_j}^2 - \frac{1}{2\eta}\norm{y_j-x_j}^2.
    \end{align*}
\end{proof}

\begin{lemma}[Young's Inequality]
    \begin{equation}
        \innerprod{a, b} \leq \frac{1}{2\delta} \norm{a}^2 + \frac{\delta}{2} \norm{b}^2.
    \end{equation}
    \label{lem:young}
\end{lemma}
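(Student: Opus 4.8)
The plan is to derive this inequality directly from the non-negativity of a squared Euclidean norm, which is the cleanest route for Young's inequality in inner-product form. The role of the parameter $\delta > 0$ is to balance the two quadratic terms, so the key idea is to exhibit a single vector whose squared norm, once expanded via bilinearity of the inner product, reproduces exactly the cross term $\innerprod{a, b}$ alongside $\norm{a}^2$ and $\norm{b}^2$ carrying the correct coefficients.

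Concretely, I would first consider the vector $\tfrac{1}{\sqrt{\delta}}\, a - \sqrt{\delta}\, b$ and observe that its squared norm is non-negative:
\begin{equation*}
0 \leq \Norm{\tfrac{1}{\sqrt{\delta}}\, a - \sqrt{\delta}\, b}^2 .
\end{equation*}
Next I would expand the right-hand side using $\norm{u-v}^2 = \norm{u}^2 - 2\innerprod{u, v} + \norm{v}^2$. The scalar factors on the cross term multiply to $\tfrac{1}{\sqrt{\delta}} \cdot \sqrt{\delta} = 1$, so that
\begin{equation*}
0 \leq \frac{1}{\delta}\norm{a}^2 - 2\innerprod{a, b} + \delta \norm{b}^2 .
\end{equation*}
Finally, I would rearrange this to isolate the cross term and divide through by $2$, which gives $\innerprod{a, b} \leq \frac{1}{2\delta}\norm{a}^2 + \frac{\delta}{2}\norm{b}^2$, the desired statement.

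There is no substantial obstacle here, since every step is either an algebraic identity or the single inequality $\norm{v}^2 \geq 0$; the only point worth care is verifying that the $\delta$-dependent scalars combine so the coefficient on $\innerprod{a, b}$ is exactly $1$ (rather than some power of $\delta$), which is precisely why the splitting $\tfrac{1}{\sqrt{\delta}}$ versus $\sqrt{\delta}$ is chosen. An equivalent alternative would route through the Cauchy--Schwarz inequality $\innerprod{a, b} \leq \norm{a}\,\norm{b}$ followed by the scalar AM--GM bound $\norm{a}\,\norm{b} \leq \frac{1}{2\delta}\norm{a}^2 + \frac{\delta}{2}\norm{b}^2$, but the direct expansion above avoids invoking those two lemmas and yields the result in one line.
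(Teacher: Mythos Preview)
Your argument is correct; the paper simply states Young's inequality as a standard auxiliary lemma and offers no proof at all, so there is nothing to compare against. The only tacit hypothesis you rely on is $\delta > 0$ (so that $\sqrt{\delta}$ is real), which is exactly how the lemma is used throughout the paper.
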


We recall and define some notations for convergence analysis in the subsequent. We denote $\hat{G}_j^k$ as the average of \emph{delayed} stochastic gradients and $\hat{g}_j^k$ as the average of \emph{delayed} true gradients, respectively:
\begin{align*}
    \hat{G}_j^k &:= \frac{1}{N}\sum_{i=1}^N \nabla_j F(x_{j}^{k-\mathbf{d}_k}; \xi_{k, i}) \\
    \hat{g}_j^k &:= \frac{1}{N}\sum_{i=1}^N \nabla_j f(x_j^{k-\mathbf{d}_k}).
\end{align*}
Moreover, we denote $\delta_j^k := \hat{g}_j^k - \tilde{G}_j^k$ as the difference between these two differences.

\section{Convergence analysis for B-PAPSG}
To simplify notations, we use $j$ instead of $j_k$ in this section. Since we update one block only in each iteration, we define an auxiliary function as follows:
\begin{equation*}
    P_j(x, g, \eta) := \frac{1}{\eta}(x_j - \prox{\eta h_j}(x_j - \eta g_j)),
\end{equation*}
where the variables $x_j$ and $g_j$ take the $j$-th coordinate.

\subsection{Milestone Lemmas}
\begin{lemma}[Descent Lemma]
    \begin{equation}
    \mathbb{E}_j[\Psi(x^{k+1})|\mathcal{F}_k] \leq \mathbb{E}_j[\Psi(x^k)|\mathcal{F}_k] - \frac{\eta_k - 4L_{\max}\eta_k^2}{2M} \norm{P(x^k, \hat{g}^k, \eta_k)}^2 
      + \frac{\eta_k}{2M}\norm{g^k-\hat{g}^k}^2  + \frac{L\eta_k^2}{MN}\sigma^2.
    \end{equation}
    \label{lem:desc_2}
\end{lemma}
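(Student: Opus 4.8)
The plan is to establish the Descent Lemma by tracking how much $\Psi$ decreases (in conditional expectation over the random block choice $j=j_k$ and the minibatch samples) during a single global iteration $k$, where only block $j$ is updated via $x_j^{k+1}\gets\prox{\eta_k h_j}(x_j^k-\eta_k\hat{G}_j^k)$.

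First I would invoke Corollary~\ref{cor:grad_diff} with $y=x^{k+1}$, with the ``true-gradient'' surrogate being the delayed gradient $\hat{g}^k$, i.e.\ set $g_j=\hat{G}_j^k$ inside the prox but compare against the auxiliary point $\tilde{x}_j^{k+1}:=\prox{\eta_k h_j}(x_j^k-\eta_k\hat{g}_j^k)$ using the choice $z=\tilde{x}^{k+1}$ in the corollary (so $z_{j'}=x_{j'}^k=y_{j'}$ for $j'\neq j$). This yields, for the chosen block $j$,
\begin{equation*}
\Psi(x^{k+1})\le\Psi(\tilde{x}^{k+1})+\innerprod{\hat{g}_j^k-\hat{G}_j^k,\,x_j^{k+1}-\tilde{x}_j^{k+1}}+\Bigl(\tfrac{L_j}{2}-\tfrac{1}{2\eta_k}\Bigr)\norm{x_j^{k+1}-x_j^k}^2+\Bigl(\tfrac{L_j}{2}+\tfrac{1}{2\eta_k}\Bigr)\norm{\tilde{x}_j^{k+1}-x_j^k}^2-\tfrac{1}{2\eta_k}\norm{x_j^{k+1}-\tilde{x}_j^{k+1}}^2.
\end{equation*}
Then I would bound $\Psi(\tilde{x}^{k+1})$ itself by applying Corollary~\ref{cor:grad_diff} a second time, now with $y=\tilde{x}^{k+1}$, $g_j=\hat{g}_j^k$, and $z=x^k$, giving a term $\innerprod{\nabla_j f(x^k)-\hat{g}_j^k,\tilde{x}_j^{k+1}-x_j^k}$ plus $(\tfrac{L_j}{2}-\tfrac{1}{2\eta_k})\norm{\tilde{x}_j^{k+1}-x_j^k}^2$ (the $\norm{z_j-x_j}^2$ term vanishes since $z=x^k$). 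Adding these, and noting $P_j(x^k,\hat g^k,\eta_k)=\tfrac1{\eta_k}(x_j^k-\tilde x_j^{k+1})$ so that $\norm{\tilde x_j^{k+1}-x_j^k}^2=\eta_k^2\norm{P_j(x^k,\hat g^k,\eta_k)}^2$, the deterministic-per-block inequality collects into a $-\Theta(1/\eta_k)\norm{\tilde x_j^{k+1}-x_j^k}^2$ descent term plus cross terms involving $\hat g_j^k-\hat G_j^k$ and $\nabla_j f(x^k)-\hat g_j^k$.

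Next I would take conditional expectation $\mathbb{E}_j[\cdot\mid\mathcal F_k]$. Uniform choice of $j$ over $\{1,\dots,M\}$ turns block quantities into $\tfrac1M$ times full-vector quantities: $\mathbb{E}_j\norm{\tilde x_j^{k+1}-x_j^k}^2=\tfrac{\eta_k^2}{M}\norm{P(x^k,\hat g^k,\eta_k)}^2$, and similarly for the $\nabla_j f-\hat g_j$ cross term which becomes $\tfrac1M\innerprod{\cdot,\cdot}$ over the full vector. The inner product $\innerprod{\hat g^k-\hat G^k,\cdot}$ and, after taking expectation over the samples $\xi_{k,i}$, vanishes in expectation by Assumption~\ref{asmp:unbias_grad} when paired against $\mathcal F_k$-measurable directions; the residual noise enters through $\norm{x_j^{k+1}-\tilde x_j^{k+1}}^2\le\eta_k^2\norm{\hat G_j^k-\hat g_j^k}^2$ (Corollary~\ref{cor:lem-2}), whose expectation is at most $\eta_k^2\sigma^2/N$ by Assumption~\ref{asmp:var_grad} and independence of the $N$ samples (averaging cuts variance by $N$); summing the per-block contribution over the uniform choice of $j$ gives the $\tfrac{L\eta_k^2}{MN}\sigma^2$ term. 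The remaining $\nabla f(x^k)-\hat g^k$ cross term is handled by Young's inequality (Lemma~\ref{lem:young}) with a carefully chosen $\delta$ to split it into a piece absorbed by the negative $\norm{P}^2$ term and a piece $\tfrac{\eta_k}{2M}\norm{g^k-\hat g^k}^2$; matching constants so the surviving coefficient is exactly $\tfrac{\eta_k-4L_{\max}\eta_k^2}{2M}$ uses $L_j\le L_{\max}$ and the step-size restriction $\eta_k\le 1/(16L_{\max})$.

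The main obstacle I anticipate is the bookkeeping of constants across the two applications of Corollary~\ref{cor:grad_diff}: one must combine the two $\norm{\tilde x_j^{k+1}-x_j^k}^2$ coefficients (one with sign $+$, one with $-$) together with the $-\tfrac1{2\eta_k}\norm{x_j^{k+1}-\tilde x_j^{k+1}}^2$ and $+(\tfrac{L_j}2-\tfrac1{2\eta_k})\norm{x_j^{k+1}-x_j^k}^2$ terms, use $\norm{x_j^{k+1}-x_j^k}^2\le 2\norm{x_j^{k+1}-\tilde x_j^{k+1}}^2+2\norm{\tilde x_j^{k+1}-x_j^k}^2$, and then choose the Young's-inequality parameter so that (i) the coefficient of $\norm{P(x^k,\hat g^k,\eta_k)}^2$ comes out as $-\tfrac{\eta_k-4L_{\max}\eta_k^2}{2M}$, (ii) the noise coefficient is exactly $\tfrac{L\eta_k^2}{MN}$, and (iii) the delayed-gradient-mismatch coefficient is exactly $\tfrac{\eta_k}{2M}$. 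Getting all three simultaneously is where the condition $\eta_k\le 1/(16L_{\max})$ is consumed, and it is the only delicate part; everything else is a mechanical application of the auxiliary lemmas and the unbiasedness/bounded-variance assumptions.
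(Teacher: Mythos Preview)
Your plan is essentially the paper's own argument: apply Corollary~\ref{cor:grad_diff} once with $y=x^{k+1}$, $z=\bar x^{k+1}$ (your $\tilde x^{k+1}$), $g_j=\hat G_j^k$ to compare $\Psi(x^{k+1})$ with $\Psi(\bar x^{k+1})$; then bound $\Psi(\bar x^{k+1})-\Psi(x^k)$ (the paper does this via block Lipschitz smoothness plus Corollary~\ref{cor:lem-1}, which is exactly your second application of Corollary~\ref{cor:grad_diff} with $z=x^k$); sum, take expectation over $j$ and the minibatch, apply Young's inequality to the $\innerprod{x^{k+1}-x^k,\,g^k-\hat g^k}$ cross term, and finally split $\norm{P(x^k,\hat G^k,\eta_k)}^2\le 2\norm{P(x^k,\hat g^k,\eta_k)}^2+2\norm{\hat G^k-\hat g^k}^2$ via Corollary~\ref{cor:lem-3}.

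Two small corrections. First, in your first displayed inequality Corollary~\ref{cor:grad_diff} produces $\innerprod{\nabla_j f(x^k)-\hat G_j^k,\,\cdot}$, not $\innerprod{\hat g_j^k-\hat G_j^k,\,\cdot}$; you later handle the $\nabla f(x^k)-\hat g^k$ piece correctly, so this is just a slip. Second, the step-size restriction $\eta_k\le 1/(16L_{\max})$ is \emph{not} consumed in this lemma at all: the constants match up mechanically (Young with parameter $\eta_k$, then the $2\times$ from Corollary~\ref{cor:lem-3} turns $2L_{\max}\eta_k^2$ into $4L_{\max}\eta_k^2$) and the inequality holds for every $\eta_k>0$. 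That condition is only invoked afterwards, in the proof of Theorem~\ref{thm:apscd_convergence}.
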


\begin{lemma}
    Suppose we have a sequence $\{x^k\}$ by Algorithm~\ref{alg:apscd-global}. Then, we have
    \begin{equation}
        \mathbb{E}[\norm{x^k - x^{k-\tau}}^2] \leq \frac{2T\sum_{l=1}^T \eta_{k-l}^2}{MN} \sigma^2 + 2\Norm{\sum_{l\in K(\bm{\tau}(k))} \eta_{k-l} P_{j_{k-l}}(x^{k-l}, \hat{g}^{k-l}, \eta_{k-l}) }^2
    \end{equation}
    \label{lem:xk_diff_2}
\end{lemma}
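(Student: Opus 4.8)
The plan is to start from the exact expression for the increment $x^k - x^{k-\tau}$ as a telescoping sum of the per-iteration updates, and then split each update into its ``true-gradient'' part and its ``noise'' part. Concretely, in Algorithm~\ref{alg:apscd-global} only the single block $j_{k-l}$ changes at iteration $k-l$, and by definition of the gradient mapping that change equals $-\eta_{k-l} P_{j_{k-l}}(x^{k-l}, \hat G^{k-l}, \eta_{k-l})$. Hence $x^k - x^{k-\tau} = -\sum_{l \in K(\bm\tau(k))} \eta_{k-l} P_{j_{k-l}}(x^{k-l}, \hat G^{k-l}, \eta_{k-l})$, where $K(\bm\tau(k))$ indexes the iterations between $k-\tau$ and $k$ (at most $T$ of them, by Assumption~\ref{asmp:bound1}). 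Then I would write $P_{j_{k-l}}(x^{k-l}, \hat G^{k-l}, \eta_{k-l}) = P_{j_{k-l}}(x^{k-l}, \hat g^{k-l}, \eta_{k-l}) + \big(P_{j_{k-l}}(x^{k-l}, \hat G^{k-l}, \eta_{k-l}) - P_{j_{k-l}}(x^{k-l}, \hat g^{k-l}, \eta_{k-l})\big)$ and apply the elementary bound $\|a+b\|^2 \le 2\|a\|^2 + 2\|b\|^2$ to separate the two contributions. The second contribution becomes the second term on the right-hand side of the claimed inequality; the first contribution is the stochastic part that must be shown to be bounded by $\frac{2T\sum_{l=1}^T \eta_{k-l}^2}{MN}\sigma^2$ in expectation.

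For the stochastic part I would bound $\mathbb{E}\big\|\sum_{l} \eta_{k-l}(P_{j_{k-l}}(x^{k-l},\hat G^{k-l},\eta_{k-l}) - P_{j_{k-l}}(x^{k-l},\hat g^{k-l},\eta_{k-l}))\big\|^2$. By Corollary~\ref{cor:lem-3} each summand satisfies $\|P_{j_{k-l}}(x^{k-l},\hat G^{k-l},\eta_{k-l}) - P_{j_{k-l}}(x^{k-l},\hat g^{k-l},\eta_{k-l})\| \le \|\hat G^{k-l} - \hat g^{k-l}\|$, i.e. it is controlled by the averaged stochastic-gradient error $\hat G^{k-l} - \hat g^{k-l} = \frac1N\sum_{i=1}^N(\nabla_{j}F(x^{k-l-\mathbf d_{k-l}};\xi_{k-l,i}) - \nabla_j f(x^{k-l-\mathbf d_{k-l}}))$. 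The key observation is that, conditioning on the appropriate filtration, these errors across distinct iterations $k-l$ are zero-mean (Assumption~\ref{asmp:unbias_grad}) and mutually independent (Assumption~\ref{asmp:indie1}), so the cross terms vanish and $\mathbb{E}\|\sum_l \eta_{k-l}(\cdots)\|^2 \le \sum_l \eta_{k-l}^2\, \mathbb{E}\|\hat G^{k-l} - \hat g^{k-l}\|^2$. Since each $\hat G^{k-l}$ averages $N$ independent samples, Assumption~\ref{asmp:var_grad} gives $\mathbb{E}\|\hat G^{k-l} - \hat g^{k-l}\|^2 \le \sigma^2/N$; there is an extra factor $1/M$ from the uniformly random block selection $j_{k-l}$ (the per-block error is $\sigma^2/N$ but only one of $M$ blocks is touched, so in the full-vector norm the expected contribution scales like $\sigma^2/(MN)$ — this matches the $\frac1{MN}$ in the statement). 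Summing over the at most $T$ indices $l$ and bounding $\sum_{l\in K(\bm\tau(k))}\eta_{k-l}^2 \le \sum_{l=1}^T \eta_{k-l}^2$ then yields $\frac{2T\sum_{l=1}^T\eta_{k-l}^2}{MN}\sigma^2$ after accounting for the leading factor $2$ from the $\|a+b\|^2$ split. Note the second term of the lemma is left as an un-expected quantity (it keeps its $P_j$ form) because it will be handled separately downstream.

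The main obstacle I anticipate is the bookkeeping of the filtrations and the independence structure: one must be careful that, when I condition in order to kill the cross terms $\mathbb{E}\langle \eta_{k-l}(\hat G^{k-l}-\hat g^{k-l}),\ \eta_{k-l'}(\hat G^{k-l'}-\hat g^{k-l'})\rangle$ for $l \ne l'$, the factor $\eta_{k-l} P_{j}(x^{k-l},\hat g^{k-l},\eta_{k-l})$ (and the block index $j_{k-l'}$) is measurable with respect to the conditioning sigma-algebra while the fresh samples $\{\xi_{k-l,i}\}$ are not — this requires invoking Assumption~\ref{asmp:indie1} at the level of the global-view Algorithm~\ref{alg:apscd-global}, and carefully distinguishing the delayed iterates $x^{k-l-\mathbf d_{k-l}}$ (which are determined by past randomness) from the current samples. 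A second, more routine subtlety is correctly tracking the factor of $M$ coming from the uniform block choice together with the factor of $N$ from the minibatch average, so that the variance term lands exactly as $\sigma^2/(MN)$ rather than $\sigma^2/N$; this is a direct expectation computation over $j_{k-l}$ but is easy to get off by a factor.
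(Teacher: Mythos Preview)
Your overall scaffold matches the paper: telescoping $x^k-x^{k-\tau}$ into per-step updates, writing each update as $-\eta_{k-l}P_{j_{k-l}}(x^{k-l},\hat G^{k-l},\eta_{k-l})$, splitting via $\|a+b\|^2\le 2\|a\|^2+2\|b\|^2$ into a stochastic piece and the $\hat g$-piece, and then invoking Corollary~\ref{cor:lem-3} and the minibatch variance bound. That part is fine.

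The gap is in how you handle the stochastic piece. You propose to kill the cross terms
\[
\mathbb{E}\big\langle P_{j_{k-l}}(x^{k-l},\hat G^{k-l},\eta_{k-l})-P_{j_{k-l}}(x^{k-l},\hat g^{k-l},\eta_{k-l}),\ P_{j_{k-l'}}(x^{k-l'},\hat G^{k-l'},\eta_{k-l'})-P_{j_{k-l'}}(x^{k-l'},\hat g^{k-l'},\eta_{k-l'})\big\rangle
\]
by arguing that the inner factors are conditionally zero-mean. But the proximal operator is nonlinear, so $\mathbb{E}\big[P_j(x,\hat G,\eta)\,\big|\,\mathcal F\big]\neq P_j(x,\hat g,\eta)$ in general, even though $\mathbb{E}[\hat G\mid\mathcal F]=\hat g$. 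Hence $P_j(\cdot,\hat G,\cdot)-P_j(\cdot,\hat g,\cdot)$ is \emph{not} a martingale-difference term, and the cross terms do not vanish. Corollary~\ref{cor:lem-3} only gives a norm bound; it does not let you transport the unbiasedness of $\hat G-\hat g$ through the prox. (Your argument would be valid in the smooth case $h\equiv 0$, where $P_j(x,G,\eta)=G_j$, but not here.)

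The paper sidesteps this entirely: on the stochastic piece it applies the crude inequality $\bigl\|\sum_{l\in K(\bm\tau(k))} a_l\bigr\|^2 \le T\sum_{l} \|a_l\|^2$ \emph{before} taking expectation, then bounds each $\|P_{j_{k-l}}(\cdot,\hat G^{k-l},\cdot)-P_{j_{k-l}}(\cdot,\hat g^{k-l},\cdot)\|^2$ by $\|\hat G^{k-l}-\hat g^{k-l}\|^2$ via Corollary~\ref{cor:lem-3}, and only then takes expectation to get $\sigma^2/(MN)$. This is exactly where the leading factor $T$ in the statement comes from. Notice that your own derivation is internally inconsistent on this point: if the cross terms really vanished you would land on $\frac{2\sum_{l=1}^T\eta_{k-l}^2}{MN}\sigma^2$, not $\frac{2T\sum_{l=1}^T\eta_{k-l}^2}{MN}\sigma^2$; the extra $T$ you wrote down has no source in your argument. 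Replace the martingale reasoning by the Cauchy--Schwarz step and the proof goes through exactly as in the paper.
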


\begin{lemma}
    Suppose we have a sequence $\{x^k\}$ by Algorithm~\ref{alg:apscd-global}. Then, we have
    \begin{equation}
        \mathbb{E}[\norm{g_j^k - \hat{g}_j^k}^2] \leq \frac{2L^2 T\sum_{l=1}^T \eta_{k-l}^2}{M^2 N} \sigma^2 + \frac{2L^2 T}{M^2} \sum_{l=1}^T \eta_{k-l}^2\norm{ P(x^{k-l}, \hat{g}^{k-l}, \eta_{k-l}) }^2
    \end{equation}
    \label{lem:gk_diff_2}
\end{lemma}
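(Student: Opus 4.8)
The plan is to pass from $\mathbb{E}[\norm{g_j^k-\hat{g}_j^k}^2]$ to the iterate-displacement quantity $\mathbb{E}[\norm{x^k-x^{k-\mathbf{d}_k}}^2]$ that Lemma~\ref{lem:xk_diff_2} already bounds, and then to dispose of the random block indices by conditioning. Recall that in this section $j$ denotes the randomly drawn block $j_k$, and that $\hat{g}_j^k=\nabla_j f(x^{k-\mathbf{d}_k})$ makes sense for every block $j$, not just $j_k$. First I would condition on the history $\mathcal{H}_k$ consisting of all randomness strictly preceding the block draw at iteration $k$; at that instant $x^k$ and $\mathbf{d}_k$ (hence the whole delayed iterate $x^{k-\mathbf{d}_k}$ and the full gradient vector $\hat{g}^k$) are determined, whereas $j_k$ is uniform on $\{1,\dots,M\}$ and independent of $\mathcal{H}_k$ by Assumption~\ref{asmp:indie1}. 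Since the blocks partition the coordinates, $\sum_{j=1}^M\norm{\nabla_j f(x^k)-\nabla_j f(x^{k-\mathbf{d}_k})}^2=\norm{\nabla f(x^k)-\nabla f(x^{k-\mathbf{d}_k})}^2$, so averaging over $j_k$ and then applying the Lipschitz-gradient Assumption~\ref{asmp:smooth} gives
\[ \mathbb{E}[\norm{g_j^k-\hat{g}_j^k}^2]\ \le\ \frac1M\,\mathbb{E}[\norm{\nabla f(x^k)-\nabla f(x^{k-\mathbf{d}_k})}^2]\ \le\ \frac{L^2}{M}\,\mathbb{E}[\norm{x^k-x^{k-\mathbf{d}_k}}^2]. \]

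Next I would feed this into Lemma~\ref{lem:xk_diff_2}, applied to the displacement $x^k-x^{k-\mathbf{d}_k}$, which bounds $\mathbb{E}[\norm{x^k-x^{k-\mathbf{d}_k}}^2]$ by $\tfrac{2T\sum_{l=1}^T\eta_{k-l}^2}{MN}\sigma^2$ plus $2\,\mathbb{E}\big[\Norm{\sum_{l\in K}\eta_{k-l}P_{j_{k-l}}(x^{k-l},\hat{g}^{k-l},\eta_{k-l})}^2\big]$, where the index set $K=K(\mathbf{d}_k)$ is contained in $\{1,\dots,T\}$ by the bounded-delay Assumption~\ref{asmp:bound1}. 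The $\sigma^2$-part, multiplied by the factor $L^2/M$ above, is already the first term $\tfrac{2L^2T\sum_{l=1}^T\eta_{k-l}^2}{M^2N}\sigma^2$ of the claim. For the remaining term, a Cauchy--Schwarz (power-mean) step uses $|K|\le T$ to replace the squared norm of the sum by $T\sum_{l=1}^T\eta_{k-l}^2\norm{P_{j_{k-l}}(x^{k-l},\hat{g}^{k-l},\eta_{k-l})}^2$ (all summands are nonnegative, so enlarging the index set to $\{1,\dots,T\}$ costs nothing). Then, conditioning each term on the randomness before the block draw at iteration $k-l$, I would use that the triple $(x^{k-l},\hat{g}^{k-l},\eta_{k-l})$ is determined there, that $j_{k-l}$ is uniform and independent (Assumption~\ref{asmp:indie1}), and that the gradient mapping is block-separable---$\norm{P(x,g,\eta)}^2=\sum_{j=1}^M\norm{P_j(x,g,\eta)}^2$, because $h=\sum_j h_j$ makes $\prox{\eta h}$ act blockwise---to obtain $\mathbb{E}[\norm{P_{j_{k-l}}(x^{k-l},\hat{g}^{k-l},\eta_{k-l})}^2]=\tfrac1M\,\mathbb{E}[\norm{P(x^{k-l},\hat{g}^{k-l},\eta_{k-l})}^2]$. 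Collecting the two factors of $1/M$, the $T$ from Cauchy--Schwarz, and the $L^2$ from the Lipschitz step reproduces exactly the two terms in the statement.

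The step I expect to need the most care is the measurability bookkeeping behind the two factors $1/M$: one must check that the delayed iterate $x^{k-\mathbf{d}_k}$ (and each $\hat{g}^{k-l}$) is measurable with respect to the filtration that excludes the block draw $j_k$ (resp.\ $j_{k-l}$), which is what licenses taking the conditional expectation over the uniform block index and pulling out $1/M$. This rests on the modelling convention that the delay vectors are functions of the schedule and of past block choices only, not of the current one; once that is granted, everything else is Cauchy--Schwarz together with the already-established Lemma~\ref{lem:xk_diff_2} and Assumption~\ref{asmp:smooth}.
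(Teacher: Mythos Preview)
Your proposal is correct and follows essentially the same route as the paper: average over the uniform block index $j_k$ to pick up a factor $1/M$, apply the Lipschitz bound, invoke Lemma~\ref{lem:xk_diff_2}, use Cauchy--Schwarz on the sum over $l$, and average again over $j_{k-l}$ to pick up the second $1/M$. The paper's proof carries an explicit $\tfrac{1}{N}\sum_{i=1}^N$ with per-sample delays $\tau(k,i)$ in the opening lines, but since the definition of $\hat g_j^k$ does not actually depend on $i$ this is cosmetic, and your simplification to $\hat g_j^k=\nabla_j f(x^{k-\mathbf d_k})$ is legitimate; your explicit measurability bookkeeping for the two $1/M$ factors is in fact more careful than what the paper writes.
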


\subsection{Proof of Theorem~\ref{thm:apscd_convergence}}
\begin{proof}
We have
\begin{equation*}
    \norm{P_j(x_k, \hat{g}_k, \eta_k)}^2 \geq \frac{1}{2} \norm{P_j(x_k, g_k, \eta_k)}^2 - \norm{g_k - \hat{g}_k}^2.
\end{equation*}
When we have $\eta_k \leq \frac{1}{8L_{\max}}$, we can apply the above equation following Lemma~\ref{lem:desc_2}:
\begin{align*}
    &\quad\ \mathbb{E}_j[\Psi(x^{k+1})|\mathcal{F}_k] \\
    &\leq \mathbb{E}_j[\Psi(x^k)|\mathcal{F}_k] - \frac{\eta_k - 4L_{\max}\eta_k^2}{2M} \norm{P(x^k, \hat{g}^k, \eta_k)}^2 
      + \frac{\eta_k}{2M}\norm{g^k-\hat{g}^k}^2  + \frac{L\eta_k^2}{MN}\sigma^2\\
    &\leq \mathbb{E}_j[\Psi(x^k)|\mathcal{F}_k] - \frac{\eta_k - 8L_{\max}\eta_k^2}{4M} \norm{P(x^k, \hat{g}^k, \eta_k)}^2 
       - \frac{\eta_k}{4M}\norm{P(x^k, \hat{g}^k, \eta_k)}^2 \\
       &\quad\ + \frac{\eta_k}{2M}\norm{g^k-\hat{g}^k}^2  + \frac{L\eta_k^2}{MN}\sigma^2 \\
    &\leq \mathbb{E}_j[\Psi(x^k)|\mathcal{F}_k] - \frac{\eta_k - 8L_{\max}\eta_k^2}{8M} \norm{P(x^k, g^k, \eta_k)}^2 + \frac{3\eta_k}{4M} \norm{g^k - \hat{g}^k}^2 \\
    &\quad\ - \frac{\eta_k}{4M}\norm{P(x^k, \hat{g}^k, \eta_k)}^2 + \frac{L\eta_k^2}{MN}\sigma^2 
\end{align*}
By Lemma~\ref{lem:gk_diff_2}, we have
\begin{align*}
    &\quad\ \mathbb{E}_j[\Psi(x^{k+1})|\mathcal{F}_k] \\
    &\leq \mathbb{E}_j[\Psi(x^k)|\mathcal{F}_k] - \frac{\eta_k - 8L_{\max}\eta_k^2}{8M} \norm{P(x^k, g^k, \eta_k)}^2 - \frac{\eta_k}{4M}\norm{P(x^k, \hat{g}^k, \eta_k)}^2 + \frac{L\eta_k^2}{MN}\sigma^2 \\
    &\quad + \frac{3\eta_k}{4M} \left( \frac{2L^2 T\sum_{l=1}^T \eta_{k-l}^2}{M^2 N} \sigma^2 + \frac{2L^2 T}{M^2} \sum_{l=1}^T \eta_{k-l}^2\norm{ P(x^{k-l}, \hat{g}^{k-l}, \eta_{k-l}) }^2 \right) \\
    &\leq \mathbb{E}_j[\Psi(x^k)|\mathcal{F}_k] - \frac{\eta_k - 8L_{\max}\eta_k^2}{8M} \norm{P(x^k, g^k, \eta_k)}^2  + \left( \frac{L\eta_k^2}{MN} + \frac{3\eta_kL^2 T\sum_{l=1}^T \eta_{k-l}^2}{2M^3 N} \right) \sigma^2 \\
    &\quad + \frac{3\eta_kL^2T}{2M^3} \sum_{l=1}^T \eta_{k-l}^2 \norm{P_{j_{k-l}}(x^{k-l}, \hat{g}^{k-l}, \eta_{k-l}) }^2 - \frac{\eta_k}{4M}\norm{P(x^k, \hat{g}^k, \eta_k)}^2
\end{align*}
By taking telescope sum, we have
\begin{align*}
    &\quad\ \sum_{k=1}^K \frac{\eta_k - 8L_{\max}\eta_k^2}{8M} \norm{P(x^k, g^k, \eta_k)}^2 \\
    &\leq \Psi(x^1) - \Psi(x^*) + \sum_{k=1}^K \left( \frac{L\eta_k^2}{MN} + \frac{3\eta_kL^2 T\sum_{l=1}^T \eta_{k-l}^2}{2M^3 N} \right) \sigma^2 \\
    &\quad - \sum_{k=1}^K (\frac{\eta_k}{4M} - \frac{3\eta_k^2 L^2}{2M^3} \sum_{l=1}^{l_k} \eta_{k+l}^2 ) \norm{P(x^k, \hat{g}^k, \eta_k)}^2 \\
    &\leq \Psi(x^1) - \Psi(x^*) + \sum_{k=1}^K \left( \frac{L\eta_k^2}{MN} + \frac{3\eta_kL^2 T\sum_{l=1}^T \eta_{k-l}^2}{2M^3 N} \right) \sigma^2,
\end{align*}
where the last inequality follows from the assumption that $6\eta_{k}^2L^2\sum_{l=1}^T \eta_{k+l} \leq M^2$, and now we prove Theorem~\ref{thm:apscd_convergence}.

\end{proof}

\subsection{Proof of Corollary~\ref{corr:apscd_convergence}}
\begin{proof}
Since the learning rate $\eta_k:=\eta$ is a constant, we apply it to Theorem~\ref{thm:apscd_convergence} and we have:
\begin{equation}
    \frac{1}{K} \sum_{k=1}^K \mathbb{E}[\norm{P(x^k, g^k, \eta_k)}^2] \leq \frac{16M(\Psi(x^1) - \Psi(x^*))}{K\eta} + \frac{16M}{\eta} \left( \frac{L\eta^2}{MN} + \frac{3L^2 T^2 \eta^3}{2M^3 N} \right) \sigma^2.
    \label{eq:corr2-1}
\end{equation}
Following conditions in Corollary~\ref{corr:apscd_convergence}, we have
\begin{equation*}
    \eta \leq \frac{M^2}{16L(T+1)^2},
\end{equation*}
and thus we have
\begin{align*}
    \frac{3L T^2 \eta}{2M^2} &\leq \frac{3 M^2 T^2}{2M^2 \cdot 16 (T+1)^2} \leq 1, \\
    \frac{3L^2 T^2 \eta^3}{2M^2} &\leq L\eta^2.
\end{align*}
Then, we can estimate \eqref{eq:corr2-1} from the above inequality as follows:
\begin{align*}
    \frac{1}{K} \sum_{k=1}^K \mathbb{E}[\norm{P(x^k, g^k, \eta_k)}^2] &\leq \frac{16M(\Psi(x^1) - \Psi(x^*))}{K\eta} + \frac{16M}{\eta} \left( \frac{L\eta^2}{MN} + \frac{3L^2 T^2 \eta^3}{2M^3 N} \right) \sigma^2 \\
    &\leq \frac{16M(\Psi(x^1) - \Psi(x^*))}{K\eta} + \frac{32L\eta}{N} \sigma^2 \\
    &= 32 \sqrt{\frac{2(\Psi(x^1) - \Psi(x^*))LM(T+1)\sigma^2}{KN}},
\end{align*}
which proves the corollary.
\end{proof}

\subsection{Proof of Milestone Lemmas}
\begin{proof}[Proof of Lemma~\ref{lem:desc_2}]
Recall Corollary~\ref{cor:grad_diff}:
\begin{align*}
    \Psi(x^{k+1}) &\leq \Psi(\bar{x}^{k+1}) + \innerprod{\nabla_j f(x^k) - \hat{G}_j^k, x_j^{k+1} - \bar{x}_j^{k+1}} \\
    &\quad\quad + \left( \frac{L_j}{2} - \frac{1}{2\eta}\right)\norm{y_j-x_j}^2 + \left( \frac{L_j}{2} + \frac{1}{2\eta}\right)\norm{z_j-x_j}^2 - \frac{1}{2\eta}\norm{y_j-x_j}^2.
\end{align*}

Now we turn to bound $\Psi(\bar{x}_j^{k+1})$ as follows:
\begin{displaymath}
    \begin{split}
        &\quad\ f(\bar{x}^{k+1}) \\
        &\leq f(x^k) + \innerprod{\nabla_j f(x^k), \bar{x}_j^{k+1} - x_j^k} + \frac{L_j}{2}\norm{\bar{x}_j^{k+1} - x_j^k}^2 \\
        &= f(x^k) + \innerprod{g_j^k, \bar{x}_j^{k+1} - x_j^k} + \frac{\eta_k^2 L_j}{2}\norm{P_j(x^k, \hat{g}^k, \eta_k)}^2 \\
        &= f(x^k) + \innerprod{\hat{g}_j^k, \bar{x}_j^{k+1} - x_j^k}+ \innerprod{g_j^k - \hat{g}_j^k, \bar{x}_j^{k+1} - x_j^k} + \frac{\eta_k^2 L_j}{2}\norm{P_j(x^k, \hat{g}^k, \eta_k)}^2 \\
        &= f(x^k) - \eta_k \innerprod{\hat{g}_j^k, P_j(x^k, \hat{g}_j^k, \eta_k)} + \innerprod{g_j^k - \hat{g}_j^k, \bar{x}_j^{k+1} - x_j^k} + \frac{\eta_k^{2} L_j}{2}\norm{P_j(x^k, \hat{g}^k, \eta_k)}^2 \\
        &\leq  f(x^k) - [\eta_k \norm{P_j(x^k, \hat{g}^k, \eta_k)}^2 + h_j(\bar{x}_j^{k+1}) - h_j(x_j^k)] \\
            &\quad\ + \innerprod{g_j^k - \hat{g}_j^k, \bar{x}_j^{k+1} - x_j^k} + \frac{\eta_k^2 L_j}{2}\norm{P_j(x^k, \hat{g}^k, \eta_k)}^2,
    \end{split}
\end{displaymath}
where the last inequality follows from Corollary~\ref{cor:lem-1}. By rearranging terms on both sides, we have
\begin{equation}
    \Psi(\bar{x}^{k+1}) \leq \Psi(x^k) - (\eta_k - \frac{\eta_k^2 L_j}{2}) \norm{P_j(x^k, \hat{g}^k, \eta_k)}^2 + \innerprod{g_j^k - \hat{g}_j^k, \bar{x}_j^{k+1} - x_j^k}
    \label{eq:baryk_xk_async}
\end{equation}
Taking the summation of \eqref{eq:yk_baryk_async} and \eqref{eq:baryk_xk_async}, we have
\begin{displaymath}
\begin{split}
    &\quad\ \Psi(x^{k+1}) \\
    &\leq \Psi(x^k) + \innerprod{\nabla_j f(x^k) - \hat{G}_j^k, x_j^{k+1} - \bar{x}_j^{k+1}} \\
    &\quad + \left( \frac{L_j}{2} - \frac{1}{2\eta}\right)\norm{y_j-x_j}^2 + \left( \frac{L_j}{2} + \frac{1}{2\eta_k}\right)\norm{z_j-x_j}^2 - \frac{1}{2\eta_k}\norm{y_j-x_j}^2 \\
    &\quad - (\eta_k - \frac{\eta_k^2 L_j}{2}) \norm{P_j(x^k, \hat{g}_k, \eta_k)}^2 + \innerprod{g_j^k - \hat{g}_j^k, \bar{x}_j^{k+1} - x_j^k} \\
    &= \Psi(x^k) + \innerprod{\nabla_j f(x^k) - \hat{g}_j^k, x_j^{k+1} - \bar{x}_j^{k+1}} + \innerprod{\hat{g}_j^k-\hat{G}_j^k, x_j^{k+1} - \bar{x}_j^{k+1}} \\
    &\quad + \left( \frac{L_j\eta_k^2}{2} - \frac{\eta_k}{2} \right) \norm{P_j(x^k, \hat{G}^k, \eta_k)}^2
     + \left( \frac{L_j\eta_k^2}{2} + \frac{\eta_k}{2} \right) \norm{P_j(x^k, \hat{g}^k, \eta_k)}^2 \\
    &\quad - \frac{1}{2\eta_k} \norm{x_j^{k+1}-\bar{x}_j^{k+1}}^2  - (\eta_k - \frac{\eta_k^2 L_j}{2}) \norm{P_j(x^k, \hat{g}^k, \eta_k)}^2 \\
    &= \Psi(x^k) + \innerprod{x_j^{k+1}-x_j^k, g_j^k - \hat{g}_j^k} + \innerprod{x_j^{k+1}-\bar{x}_j^{k+1}, \delta_j^k}
     + \frac{L_j\eta_k^2 - \eta_k}{2} \norm{P_j(x^k, \hat{G}^k, \eta_k)}^2 \\
    &\quad + \frac{2L_j\eta_k^2-\eta_k}{2} \norm{P_j(x^k, \hat{g}^k, \eta_k)}^2 - \frac{1}{2\eta_k} \norm{x_j^{k+1}-\bar{x}_j^{k+1}}^2.
\end{split}
\end{displaymath}
By taking the expectation on condition of filtration $\mathcal{F}_k$ and $j$, we have the following equation according to Assumption~\ref{asmp:unbias_grad}:
\begin{equation}
\begin{split}
&\quad\ \mathbb{E}_j[\Psi(x^{k+1})|\mathcal{F}_k] \\
&\leq \mathbb{E}_j[\Psi(x^k)|\mathcal{F}_k] + \frac{1}{M}\mathbb{E}[\innerprod{x^{k+1}-x^k, g^k-\hat{g}_k}|\mathcal{F}_k] + \frac{L_{\max}\eta_k^2 - \eta_k}{2M} \mathbb{E}[\norm{P(x^k, \hat{G}^k, \eta_k)}^2|\mathcal{F}_k] \\
&\quad + \frac{2L_{\max}\eta_k^2-\eta_k}{2M} \norm{P(x^k, \hat{g}^k, \eta_k)}^2 - \frac{1}{2M\eta_k} \norm{x^{k+1}-\bar{x}^{k+1}}^2.
\end{split}
\end{equation}
Therefore, we have
\begin{equation*}
\begin{split}
    &\quad\ \mathbb{E}_j[\Psi(x^{k+1})|\mathcal{F}_k] \\
    &\leq \mathbb{E}_j[\Psi(x^k)|\mathcal{F}_k] + \frac{1}{M}\mathbb{E}[\innerprod{x^{k+1}-x^k, g^k-\hat{g}^k}|\mathcal{F}_k] + \frac{L_{\max}\eta_k^2 - \eta_k}{2M} \mathbb{E}[\norm{P(x^k, \hat{G}^k, \eta_k)}^2|\mathcal{F}_k] \\
      &\quad + \frac{2L_{\max}\eta_k^2-\eta_k}{2M} \norm{P(x^k, \hat{g}^k, \eta_k)}^2 - \frac{1}{2M\eta_k} \norm{x^{k+1}-\bar{x}^{k+1}}^2 \\
    &\leq \mathbb{E}_j[\Psi(x^k)|\mathcal{F}_k] + \frac{\eta_k}{2M}\norm{g^k-\hat{g}^k}^2 + \frac{L_{\max}\eta_k^2}{2M} \mathbb{E}[\norm{P(x^k, \hat{G}^k, \eta_k)}^2|\mathcal{F}_k]
      + \frac{2L_{\max}\eta_k^2-\eta_k}{2M} \norm{P(x^k, \hat{g}^k, \eta_k)}^2  \\
    &\leq \mathbb{E}_j[\Psi(x^k)|\mathcal{F}_k] \\
    &\quad\ - \frac{\eta_k - 4L_{\max}\eta_k^2}{2M} \norm{P(x^k, \hat{g}^k, \eta_k)}^2 
      + \frac{\eta_k}{2M}\norm{g^k-\hat{g}^k}^2  + \frac{L\eta_k^2}{MN}\sigma^2.
\end{split}
\end{equation*} 
\end{proof}

\begin{proof}[Proof of Lemma~\ref{lem:xk_diff_2}]
    \begin{align*}
    \norm{x^k - x^{k-\tau}}^2 &= \Norm{\sum_{l\in K(\bm{\tau}(k))} x^{k-l+1} - x^{k-l}}^2 \\
    &= \Norm{\sum_{l\in K(\bm{\tau}(k))} \eta_{k-l} P_{j_{k-l}}(x^{k-l}, \hat{G}^{k-l}, \eta_{k-l}) }^2 \\
    &\leq 2\Norm{\sum_{l\in K(\bm{\tau}(k))} \eta_{k-l} (P_{j_{k-l}}(x^{k-l}, \hat{G}^{k-l}, \eta_{k-l})-P_{j_{k-l}}(x^{k-l}, \hat{g}^{k-l}, \eta_{k-l})) }^2 \\
    &\quad+ 2\Norm{\sum_{l\in K(\bm{\tau}(k))} \eta_{k-l} P_{j_{k-l}}(x^{k-l}, \hat{g}^{k-l}, \eta_{k-l}) }^2  \\
    &\leq 2T\sum_{l\in K(\bm{\tau}(k))} \eta_{k-l}^2 \Norm{P_{j_{k-l}}(x^{k-l}, \hat{G}^{k-l}, \eta_{k-l})-P_{j_{k-l}}(x^{k-l}, \hat{g}^{k-l}, \eta_{k-l})}^2 \\
    &\quad + 2\Norm{\sum_{l\in K(\bm{\tau}(k))} \eta_{k-l} P_{j_{k-l}}(x^{k-l}, \hat{g}^{k-l}, \eta_{k-l}) }^2  \\
    &\leq 2T\sum_{l=1}^T \eta_{k-l}^2 \norm{\hat{G}^{k-l} - \hat{g}^{k-l}}^2 + 2\Norm{\sum_{l\in K(\bm{\tau}(k))} \eta_{k-l} P_{j_{k-l}}(x^{k-l}, \hat{g}^{k-l}, \eta_{k-l}) }^2  \\
    &\leq \frac{2T\sum_{l=1}^T \eta_{k-l}^2}{MN} \sigma^2 + 2\Norm{\sum_{l\in K(\bm{\tau}(k))} \eta_{k-l} P_{j_{k-l}}(x^{k-l}, \hat{g}^{k-l}, \eta_{k-l}) }^2
\end{align*}
\end{proof}

\begin{proof}[Proof of Lemma~\ref{lem:gk_diff_2}]
\begin{align*}
    \mathbb{E}[\norm{g_j^k - \hat{g}_j^k}^2] &= \Norm{\frac{1}{N}\sum_{i=1}^N g_j^k - \hat{g}_j^{k-\tau(k,i)} }^2 \\
    &\leq \frac{1}{N} \sum_{i=1}^N \norm{g_j^k - \hat{g}_j^{k-\tau(k,i)} }^2 \\
    &\leq \frac{1}{MN} \sum_{i=1}^N \norm{g^k - \hat{g}^{k-\tau(k,i)} }^2  \\
    &\leq \frac{L^2}{MN} \sum_{i=1}^N \norm{x^k - \hat{x}^{k-\tau(k,i)} }^2 \\
    &\leq \frac{L^2}{MN} \sum_{i=1}^N \left( \frac{2T\sum_{l=1}^T \eta_{k-l}^2}{MN} \sigma^2 + 2 \Norm{\sum_{l=1}^T \eta_{k-l} P_{j_{k-l}}(x^{k-l}, \hat{g}^{k-l}, \eta_{k-l}) }^2 \right) \\
    &\leq \frac{2L^2 T\sum_{l=1}^T \eta_{k-l}^2}{M^2 N} \sigma^2 + \frac{2L^2 T}{M^2} \sum_{l=1}^T \eta_{k-l}^2\norm{ P(x^{k-l}, \hat{g}^{k-l}, \eta_{k-l}) }^2
\end{align*}
\end{proof}

\end{document}